%% This is a LaTeX template for preparing papers for Publ. Inst. Math.; version January 2016
%% Please delete everything begining with %% (DOUBLE %).

% Submission number: 

\documentclass[a4paper]{article}
\usepackage{amssymb}
\usepackage{amsmath}
\usepackage[hyphens]{url} \urlstyle{same}
\usepackage[utf8]{inputenc}
\usepackage{mathtools}  
\usepackage{amsthm}
\usepackage{color}
\usepackage{nicefrac}
\usepackage{tcolorbox}
\usepackage{natbib}
\usepackage{subcaption}
\usepackage{graphicx}
\usepackage{float}
\usepackage{dsfont}
%\usepackage[dvips]{graphicx} %% Package for inserting illustrations/figures

%% The following packages are useful (you may want to use them):
%\usepackage{refcheck} %% Checks whether enumerated equations are referred to or not.
                       %% Please remove unnecessary numbers.
%\usepackage{cmdtrack} %% Checks whether all author defined macros are used or not
                       %% (see the end of .log file); unused ones should be removed.
%% Both packages have limitations---consult the package documentation.

%\theoremstyle{plain}
 \newtheorem{thm}{Theorem}[section]
 \newtheorem{algo}{Algorithm}
 \newtheorem{prop}{Proposition}[section]
 \newtheorem{lem}{Lemma}[section]
 \newtheorem{cor}{Corollary}[section]
 \newtheorem{ass}{Assumption}
 \newtheorem{exm}{Example}[section]
 \newtheorem{dfn}{Definition}
 \newtheorem{rem}{Remark}

\def\la{\langle}
\def\ra{\rangle}
\def\R{\mathbb{R}}
\def\E{\mathbb{E}}
\def\P{\mathbb{P}}
\def\Rn{\mathbb{R}^n}

\def\R{\mathbb{R}}
 %\numberwithin{equation}{section}

\title{Beyond Softmax: A New Perspective on Gradient Bandits\thanks{This article presents a revised and expanded treatment of \emph{Discrete Choice Multi-Armed Bandits}, superseding the earlier  preliminary report.}}

\author{Emerson Melo \thanks{Department of Economics,  Indiana University Bloomington; e-mail:  emelo@iu.edu}  \and 
David Müller \thanks{e-mail: dgfin@gmx.de}  } 
\providecommand{\keywords}[1]{\textbf{\textit{Keywords---}} #1}

\begin{document}

\maketitle
\vspace{18mm} \setcounter{page}{1}

%\begin{abstract}	
%This paper establishes a connection between a class of discrete choice models and the fields of online learning and multiarmed bandit algorithms. Our contributions are twofold. First, we provide sublinear regret bounds for a broad family of algorithms, with the Exp3 algorithm emerging as a special case. Second, we introduce a new class of adversarial multiarmed bandit algorithms inspired by the generalized nested logit models of \citet{wen:2001}. These algorithms offer significant flexibility in model specification and are computationally efficient due to their closed-form sampling probabilities. To illustrate their practical applicability, we present numerical experiments in the stochastic bandit setting.\end{abstract}
\begin{abstract}
    We establish a link between a class of discrete choice models and the theory of online learning and multi-armed bandits. Our contributions are: (i) sublinear regret bounds for a broad algorithmic family, encompassing Exp3 as a special case; (ii) a new class of adversarial bandit algorithms derived from generalized nested logit models \citep{wen:2001}; and  (iii) \textcolor{black}{we introduce a novel class of generalized gradient bandit algorithms that extends beyond the widely used softmax formulation. By relaxing the restrictive independence assumptions inherent in softmax, our framework accommodates correlated learning dynamics across actions, thereby broadening the applicability of gradient bandit methods.} Overall, the proposed algorithms combine flexible model specification with computational efficiency via closed-form sampling probabilities. Numerical experiments in stochastic bandit settings demonstrate their practical effectiveness.
\end{abstract}
\keywords{Discrete choice, convex potential, online algorithms, multiarmed bandits, regret. }
\section{Introduction}\label{s1}
The multi-armed bandit (MAB) problem is a foundational framework in decision theory and reinforcement learning, formalizing the trade-off between exploration and exploitation in uncertain environments. Inspired by the analogy of a gambler choosing among multiple slot machines, MAB models have been widely applied in economics, operations research, and computer science.

In economics, MAB models play a central role in dynamic decision-making. A prominent example is dynamic pricing, where firms adapt prices in real time to respond to demand fluctuations and maximize revenue. In online retail, for instance, \cite{besbes2009dynamic} show that bandit algorithms perform effectively under demand uncertainty. Similarly, in finance, MAB models inform portfolio management by balancing the exploration of new investment opportunities with the exploitation of established assets \citep{bertsimas2007dynamic}. They are also critical in auction design and online advertising, where advertisers employ bandit algorithms to allocate budgets across ad placements, improving bidding strategies and campaign performance \citep{gatti2012exploration}.

The study of MAB problems originates with the seminal work of \citet{robbins1952some}, which first formalized the exploration–exploitation trade-off. Since then, algorithms such as $\varepsilon$-greedy, Upper Confidence Bound (UCB), and Thompson Sampling have been extensively studied and refined. For comprehensive surveys of these classical approaches, see \citet{lattimore} and \citet{slivkins}.

\smallskip

In this paper, we advance the study of MAB by systematically leveraging tools from discrete choice theory to inform the design of online optimization and bandit algorithms. Our contributions are threefold:

First, in the experts setting, we revisit the connection between the Gradient-Based Prediction Algorithm (GBPA) and the surplus function of Random Utility Models (RUM), as recently established by \citet{melo2021learning} in the complete-feedback case. Building on this foundation, we incorporate components from discrete choice models \citep{prox} into the GBPA framework of \citet{abernethy2016perturbation}, yielding sublinear regret guarantees for a broad class of algorithms. The well-known Exp3 algorithm emerges as a special case, and our reanalysis with Gumbel smoothing yields improved regret bounds. We also compare algorithmic behavior across surplus functions derived from the traditional softmax Multinomial Logit (MNL) and the more flexible Nested Logit (NL) models.

Second, we study the adversarial MAB problem and derive sublinear expected regret bounds for algorithms in the loss-only framework of \citet{abernethy2016perturbation}. Our main contribution here is the introduction of a new family of adversarial MAB algorithms based on Generalized Extreme Value (GEV) models. We identify a sufficient condition for differential consistency in GEV-based algorithms and show that Generalized Nested Logit (GNL) models \citep{wen:2001} satisfy this condition. Since GNL subsumes the widely used MNL model underlying Exp3 \citep{cesa2006prediction}, our framework provides a unifying view of GBPA bandit algorithms. Crucially, this extension accommodates correlations across arms and complex nesting structures, substantially broadening the scope of adversarial bandit methods beyond what is addressed in prior work.

Third, we address the stochastic MAB problem by introducing the Generalized Gradient Bandit Algorithms, a family of methods grounded in GNL models. This family strictly generalizes the classical Gradient Bandit Algorithm \citep{sutton}, extending its applicability by incorporating richer preference structures. To the best of our knowledge, prior analyses of gradient bandit methods have been limited to the MNL model. Our approach, by contrast, exploits correlations across arms to refine exploration–exploitation trade-offs, even in flat (non-hierarchical) settings. This allows the algorithm to prioritize promising subsets of actions and capture dependencies that standard gradient bandit methods cannot.

A distinctive feature of our framework is that the nested structure influences both the sampling distribution and the preference-update dynamics. Observed rewards are partially propagated to related arms, enabling information sharing within groups of alternatives. This property is advantageous in a range of applications, including recommendation systems, dynamic pricing, healthcare, advertising, and reinforcement learning. Importantly, the family of Generalized Gradient Bandit Algorithms retains closed-form sampling probabilities, ensuring computational efficiency.

To evaluate effectiveness, we conduct simulation experiments with an NL-based variant of our algorithm. This NL bandit recovers the classical Gradient Bandit Algorithm as a special case, guaranteeing equivalent performance in unstructured environments. Our experiments show that, when structural assumptions are present, NL bandit variants consistently outperform the baseline. The observed improvements stem from more informed exploration: the nested structure enables information sharing across related arms, accelerating the identification and exploitation of high-reward options.

\subsection{Related literature} The literature on MAB is extense and for an excellent textbook treatment for classical models and methods we refer the reader to \cite{lattimore} and \cite{slivkins}.  In the context of adversarial multi-armed bandits (MAB), \citet{abernethy2016perturbation} propose a unifying framework for the Follow-The-Regularized-Leader (FTRL) and Follow-The-Perturbed-Leader (FTPL) algorithms, grounded in gradient-based potential functions. Specifically, they introduce the Gradient-Based Prediction Algorithm (GBPA), which leverages convex potential functions to update predictions via their gradients, thereby providing a unified perspective that encompasses both FTRL and FTPL methods. Importantly, they demonstrate that GBPA can be implemented in the adversarial MAB setting and establish sublinear regret guarantees under the condition of differential consistency. Their analysis applies to the case of independent arms.

Our work departs from \citet{abernethy2016perturbation} in at least three key respects. First, we establish a novel connection between GBPA and the GEV family, which enables us to provide an explicit characterization of differential consistency. Second, we extend sublinear regret guarantees to settings where actions may be correlated and decision-making exhibits a nested structure—scenarios not addressed in their work. Finally, we introduce and analyze generalized gradient bandit algorithms, a class of methods absent from \citet{abernethy2016perturbation}’s discussion.
\smallskip

Recent work has extended the MAB framework to incorporate richer modeling assumptions, including contextual information \citep{li2010contextual} and hierarchical structures \citep{martin2022nested}. These extensions enhance decision-making by leveraging additional sources of information beyond simple reward signals. Notable advances include the development of UCB algorithms \citep{auer} and the application of Bayesian methods \citep{agrawal2012thompson}, which have enabled more adaptive and statistically principled bandit models. More recently, \citet{li2024optimism} relaxed the traditional FTPL assumption of independently and identically distributed (i.i.d.) noise across arms by allowing correlated perturbations. Yet, these studies do not address the GEV class or the generalized gradient bandit algorithm. By leveraging the GEV framework, we are able to derive explicit nesting structures, which provide additional insight.
\smallskip

Similarly, \citet{lee2025revFTPL} exploit the theory of RUM to construct hybrid choice models that yield Best-of-Both-Worlds guarantees. However, their analysis does not consider the GEV class or the generalized gradient bandit algorithm.
\smallskip

Finally, in the context of generalized gradient algorithms, the closest work to ours is \citet{martin2022nested}. Their approach is hierarchical in nature, embedding the NL model into a multi-level decision structure to capture complex inter-arm relationships. In contrast, while hierarchical models rely on explicit nesting across decision layers, our approach operates within the standard flat MAB paradigm and incorporates nested preferences directly into the choice model.  \\

\noindent {\bfseries Notation:}  Our notation is quite standard. By $\mathbb{R}^n $ we denote the space of n-dimensional vectors, where  the vectors $x = \left(x^{(1)}, x^{(2)}, \ldots, x^{(n)}\right)^T $ are column vectors. For $x \in \Rn$ we write $x^{-(i)} \in \R^{n-1}$ meaning that the $i$-th component of $x$ is missing. Analogously, we write $x^{-(i,j)} \in \R^{n-2}$ which means that the components $i$-th and $j$-th  of $x$ are missing. Using the latter, we write with some abuse of notation:
 \[
 x = \left(x^{-(i,j)}, x^{(j)}, x^{(i)}
 \right).
 \] 

 We denote by $e_j \in \R^n$ the $j$-th coordinate vector of $\R^n$ and write  $e$ for the vector of an appropriate dimension whose components are equal to one. Similarly, we write $\mathbf{0}$ for the vector of an appropriate dimension whose components are equal to zero. For a vector $x \in \R^n$ we write $e^x$ for the exponential operation of all the components, i.\,e. 
 \[
 e^x = \left(e^{x{(1)}}, e^{x{(2)}}, \ldots, e^{x{(i)}}, \ldots, e^{x{(n)}} \right)^T.
 \]
 With this convention, the following holds:
 \[
 e^{\mathbf{0}} = \left(e^{0}, \ldots, e^{0}, \ldots, e^{0} \right)^T = \left(1, \ldots, 1, \ldots, 1 \right)^T = e^T.
 \]
 By $\R^n_+$ we denote the set of all vectors with nonnegative components.
We introduce the standard inner product in $\mathbb{R}^n$:
\[
\left\langle x,y\right\rangle = \sum\limits_{i=1}^{n} x^{(i)} y^{(i)}. 
\]
 If $y > 0$ we define the vector division:
 \[
 \frac{x}{y} = \left(\frac{x^{(1)}}{y^{(1)}}, \ldots, \frac{x^{(n)}}{y^{(n)}}\right)^T.
 \]
 For $x \in \mathbb{R}^n $ we use the norms
\[
\|x\|_1 = \sum\limits_{i=1}^{n} |x^{(i)}|, \quad 
\|x\|_2 = \sqrt{\sum\limits_{i=1}^{n} \left(x^{(i)}\right)^2}, \quad
\|x\|_\infty = \underset{1 \le i \le n}{\max} |x^{(i)}| . 
\] 
Given a function $ f $ we denote its  domain by  $\mbox{dom} f = \{x \in \mathbb{R}^n \, | \,f(x) < \infty\}. $ Further, we recall the definition of the convex conjugate of the function $f: $ \[f^\star(x^\star) = \underset{x \in \mathbb{R}^n}{\sup}\left\langle x,x^\star \right\rangle - f(x), \] where $x^\star $ is a vector of dual variables. Finally, for the $(n-1)$-dimensional simplex we write   \[
\Delta_n = \left\{p \in \mathbb{R}^n \,\left|\, \sum\limits_{i=1}^{n} p^{(i)} = 1, \;  p^{(i)} \ge 0, \; i=1, \ldots, n \right. \right\}. 
\]
The Bregman divergence of a convex function $f$ is given by:
\[
D_f(y,x) = f(y) - f(x) - \la \nabla f(x), y-x \ra, \quad \mbox{for all} \quad x,y \in \mbox{dom} f.
\]
A function $f: \R^n \rightarrow \R$ is $L$-strongly smooth w.r.t. $\|\cdot\|$ norm if it is differentiable, and for all $x, y \in \R^n$ we have:
\[
  f(y) \leq f(x) + \langle \nabla f(x), y-x \rangle + \frac{L}{2} \|y-x\|^2.
\]
The positive constant $L$ is called the smoothness parameter of $f$.
Obviously, for a $L$-strongly smooth function $f$ it holds:
\[
D_f(y,x) \leq  \frac{L}{2} \|y-x\|^2.
\]

\section{Discrete Choice Review}\label{s2}  %% Please avoid complex formulas in (sub)titles
In this section, we review discrete choice behavior as modeled by additive random utility models (ARUM). We argue that these models provide a natural foundation for designing online optimization algorithms in the experts setting. Additionally, we summarize recent results on the algorithmic aspects of discrete choice models, with a particular focus on the connection between ARUMs and online optimization. In online optimization, data becomes available sequentially rather than in batches. At each iteration, a new data point arrives, prompting an update to the decision. Unlike in MABs, however, the agent observes the full vector of payoffs or losses, enabling more informed updates.
\subsection{The ARUM class}\label{ss:ARUM}
 The ARUM class—characterized by the additive decomposition of utility into deterministic and stochastic components—was first introduced in the seminal work of \citet{thurstone}, which aimed to rationalize stimulus-response experiments. A formal description of this framework was later introduced in the economics literature by \citet{gev}, where rational decision-makers are assumed to choose from a finite set of mutually exclusive alternatives $A = \{1, \ldots, n\}$. Each alternative $i \in A$ yields the random utility $\tilde{u}^{(i)}$, which is assumed to have the following additive structure:
 \[\tilde{u}^{(i)}=u^{(i)} + \epsilon^{(i)}, \] where $ u^{(i)} \in \mathbb{R} $ is the deterministic utility part of the $i$-th alternative and $\epsilon^{(i)} $ is its stochastic error component.  In the economic literature, the term is often $\epsilon^{(i)} $ referred as alternative $i$'s preference shocks which captures the idea of unobserved heterogeneity across a population of decision-makers (\cite{mcfadden}).
 
For the sake of clarity, throughout the paper we use vector notation to represent both the deterministic and random utilities, respectively:
\[
u = \left(u^{(1)}, \ldots, u^{(n)}\right)^T, \quad \epsilon = \left(\epsilon^{(1)}, \ldots, \epsilon^{(n)}\right)^T. 
\] 

Next, we introduce the notion of the surplus function, which plays a central role in the ARUM framework and, consequently, in our analysis. Formally, the surplus function is defined as the expected maximum overall utility:
\begin{equation}\label{consumer surplus general}
E(u) \triangleq
\mathbb{E}_\epsilon  \left[\max_{i\in A} \{u^{(i)} + \epsilon^{(i)}\}\right].   
\end{equation}

The following assumption concerning random errors is standard, see e.g. \citet{palma}. 
\begin{ass}[]\label{ass:joint density}
	The random vector $\epsilon $ follows a joint distribution with zero mean that is absolutely continuous with respect to the Lebesgue measure fully supported on $\mathbb{R}^n. $
\end{ass} 
Under Assumption \ref{ass:joint density}, the surplus function is convex and differentiable \citep{palma}. In particular, the well-known Williams-Daly-Zachary theorem  states that the gradient of $E$ corresponds to the vector of choice probabilities \citep{gev} which can be stated in terms of partial derivatives of $E$:  \begin{equation}\label{eq:dalytheorem}
\frac{\partial E(u)}{\partial u^{(i)}} = \mathbb{P} \left( u^{(i)} +  \epsilon^{(i)} = \underset{ l\in A}{\max} \{u^{(l)} + \epsilon^{(l)} \}\right), \quad\forall i \in A.
\end{equation}

We denote this probability by $\mathbb{P}^{(i)}$. 
This formula holds due to Assumption \ref{ass:joint density} as  ties in Equation \eqref{consumer surplus general} occur with probability zero. Furthermore, we note that a particular distribution for $\epsilon$, expressions \eqref{consumer surplus general} and \eqref{eq:dalytheorem}  fully characterize the surplus 
function $E$ and the choice probability vector $\mathbb{P}$. 

\subsection{The Generalized Extreme Value model }We now focus in the class of Generalized Extreme Value (GEV) models, introduced by \citet{mcfadden1978modelling} and \citet{mcfadden}. The GEV class encompasses a broad range of models, including the widely used MNL and NL models. 

In developing the GEV class, McFadden introduces the notion of a \emph{generator function}, which we define now.
\begin{dfn}\label{Generator_GEV} A  function $G:\Rn_{+}\longrightarrow\R_+$ is a generator if the following conditions hold:
 	\begin{itemize}
 		\item[(i)]Non-negativity: For all $x=(x^{(1)},\ldots,x^{(n)})\in \Rn_+$, $G(x)\geq 0$.
 		\item[(ii)]Homogeneity of degree 1:  $G(\lambda x)=\lambda G(x)$ for all $x\in \Rn_+$ and $\lambda>0.$
 		\item[(iii)]Coercivity in each argument: For each $i = 1, \ldots, n$, it holds that $G(x) \to \infty$ as $x^{(i)} \to \infty$, with all other components of $x$ held fixed.
 		\item[(iv)]   Sign structure of cross-partial derivatives: For any set of $k$ distinct indices $i_1, \ldots, i_k \in \{1, \ldots, n\}$, the $k$-th order mixed partial derivative satisfies:
$$\frac{\partial^k G(x)}{\partial x^{(i_1)} \cdots \partial x^{(j_k})} \begin{cases}
\geq 0 & \text{if } k \text{ is odd}, \\
\leq 0 & \text{if } k \text{ is even}.
\end{cases}$$
 	\end{itemize}
\end{dfn}

 \citet{mcfadden1978modelling,mcfadden} show that when the generator function $G$ satisfies conditions (i)–(iv), the random vector $\epsilon = (\epsilon^{(1)}, \ldots, \epsilon_{(n)}$ if it follows the joint distribution given by the following probability density function:
function
\[
f_\epsilon\left(x^{(1)}, \ldots, x^{(n)}\right) = \frac{\partial^n \exp\left(-G\left(e^{-x^{(1)}},\ldots, e^{-x^{(n)}}\right)\right)}{\partial x^{(1)} \cdots \partial x^{(n)}},
\]

It is well-known from \citet{mcfadden1978modelling} that the surplus function for GEV is
\begin{equation}\label{eq:gev_surplus}
E(u) = \mu \ln G\left(e^{u}\right),
\end{equation}
where we neglect an additive constant.

Using the result in Eq. \eqref{eq:dalytheorem}, it follows  that the choice probability of the $i$-th alternative  corresponds to:

\begin{equation}\label{eq:gev_choiceprob}
\P^{(i)} =\frac{\partial E(u)}{\partial u^{(i)}} = \mu \frac{\partial G\left(e^{u}\right)}{\partial x^{(i)}}\cdot \frac{e^{u^{(i)}}}{G\left(e^{u}\right)}.
\end{equation}

An important instance of the GEV family, corresponds to the  generalized nested logit (GNL) model  introduced by \citet{wen:2001}, where the    generating function $G$ is defined as follows:

\begin{equation}\label{eq:gnl_generatingfct}
G(x)= \sum_{\ell \in L} \left( \sum_{i =1}^{n} \left(\sigma_{i\ell}\cdot x^{(i)}\right)^{\nicefrac{1}{\mu_\ell}} \right)^{\nicefrac{\mu_\ell}{\mu}}.
\end{equation}
Here, $L$ is a generic set of nests. The parameters $\sigma_{i\ell} \geq 0$ denote the shares of the $i$-th alternative with which it is attached to the $\ell$-th nest. For any fixed $i \in \{1, \ldots,n\}$ they sum up to one:
\[
\sum_{\ell \in L} \sigma_{i\ell}  = 1.
\]
$\sigma_{i\ell}=0$ means that the $\ell$-th nest does not contain the $i$-th alternative. Hence, the set of alternatives within the $\ell$-th nest is
\[
N_\ell = \left\{i \,|\, \sigma_{i\ell} >0\right\}.
\]
The nest parameters $\mu_\ell > 0$ describe the variance of the random errors while choosing alternatives within the $\ell$-th nest. Analogously, $\mu >0$ describes the variance of the random errors while choosing among the nests. For the function $G$ to fulfill (G1)-(G3) we require:
\[
\mu_\ell \leq \mu \quad \mbox{for all } \ell \in L. 
\]
The underlying choice process can be divided into two stages:
\begin{itemize}
	\item[(1)] the probability of choosing the $\ell$-th nest is
	\[
	\hat{\P}^{(\ell )}= \frac{e^{\frac{v_\ell}{\mu}}}{\displaystyle
		\sum_{\ell \in L}e^{\frac{v_\ell}{\mu}}},
	\]
	where
	\[
	v_\ell = \mu_\ell \ln \left( \sum_{i =1}^{n} \left(\sigma_{i\ell} \cdot e^{u^{(i)}}\right)^{\frac{1}{\mu_\ell}} \right)
	\]
	stands for the utility attached to the $\ell$-th nest;
	\item[(2)] the probability of choosing the $i$-th alternative within the $\ell$-th nest is
	\[
	\P^{(i|\ell)} = \frac{\left(\sigma_{i\ell} \cdot e^{u^{(i)}}\right)^{\frac{1}{\mu_\ell}}}{\displaystyle
		\sum_{i=1}^{n} \left(\sigma_{i\ell} \cdot e^{u^{(i)}}\right)^{\frac{1}{\mu_\ell}}}.
	\]
\end{itemize}
Overall, the choice probability of the $i$-th alternative according to GNL amounts to
\[
\P^{(i)} = \mu \frac{\partial G\left(e^{u}\right)}{\partial x^{(i)}}\cdot\frac{e^{u^{(i)}}}{G\left(e^{u}\right)}= \sum_{\ell \in L} \hat{\P}^{(\ell )} \cdot \P^{(i|\ell)} .
\] 

We illustrate the concept of the generating function based on the MNL. Recall that in the MNL  the random errors in \eqref{consumer surplus general} are assumed to be i.i.d Gumbel-distributed. 
\begin{exm}[MNL Model] 
\label{ex:mnl}
The generating function  
\[
G(x)= \sum_{i =1}^{n} \left(x^{(i)}\right)^{\nicefrac{1}{\mu}}
\]
leads to the MNL, since The corresponding surplus function becomes
\[
E(u) = \mu \ln \sum_{i =1}^{n} e^{\nicefrac{u^{(i)}}{\mu}},
\]
and the choice probabilities are
\begin{equation}\label{eq}
\P \left( u^{(i)} + \epsilon^{(i)} = \max_{l\in A} \{u^{(l)} + \epsilon^{(l)}\}\right)= \frac{e^{\nicefrac{u^{(i)}}{\mu}}}{\displaystyle
	\sum_{i=1}^{n}e^{\nicefrac{u^{(i)}}{\mu}}}, \quad i\in A.
\end{equation}
\end{exm}
The MNL model is very popular. However, it is not able to capture non-independent substitution patterns due to the Independence of Irrelevant Alternatives Axiom (IIA). This might be a drawback in designing an online optimization algorithm for several scenarios. Another well-known instance of the GNL family which violates the IIA and is thus be able to deal with dependent alternatives  is the NL model.
\begin{exm}[NL Model]
	\label{ex:nl}
	Let in GNL for every alternative $i \in A= \{1,\ldots,n\}$ there be a unique nest $\ell_i \in L$ with $\sigma_{i \ell_i}=1$, and $\mu=1$. Then, 
		the nests $N_\ell=\left\{i \,|\, \ell_i=\ell\right\}$ are mutually exclusive, and 
		the generating function
		\[
		G(x)=\sum_{\ell \in L} \left( \sum_{i \in N_\ell} x^{(i)\nicefrac{1}{\mu_\ell}} \right)^{\mu_\ell}
		\]
		leads to the nested logit (NL). The corresponding surplus function is
		\[
		E(u) = \mu \ln \sum_{\ell \in L} \left( \sum_{i \in N_\ell} e^{\nicefrac{u^{(i)}}{\mu_\ell}} \right)^{\mu_\ell},
		\]
		and the choice probabilities for $i \in N_\ell$, $\ell \in L$ are
		\[
		\P^{(i)} = 
		\frac{e^{\mu_\ell \ln \sum_{i \in N_\ell} e^{\nicefrac{u^{(i)}}{\mu_\ell}}}}{\displaystyle
			\sum_{\ell \in L} e^{\mu_\ell \ln \sum_{i \in N_\ell} e^{\nicefrac{u^{(i)}}{\mu_\ell}}}}\cdot   \frac{e^{\nicefrac{u^{(i)}}{\mu_\ell}}}{\displaystyle
			\sum_{i\in N_\ell} e^{\nicefrac{u^{(i)}}{\mu_\ell}}}.
		\]
	\end{exm}
\subsection{Algorithmic Aspects of ARUM and Online Optimization}
Recently, discrete choice models have been linked to convex optimization frameworks \citep{prox}. Specifically, the authors incorporate prox-functions—derived from the convex conjugates of  the surplus functions—into dual averaging schemes. Building on this connection, \citet{melo2021learning} applied these insights to develop online optimization algorithms grounded in in choice surplus functions.

For completeness, we review the framework of online optimization in the $n$-experts setting. Let $A=\{1,\ldots,n\}$ be a finite set of alternatives and  let $\Delta_n$ denote the $n$-dimensional simplex over the set  $A$. Let $T\geq 2$ denote the (exogenous) number of periods. At each iteration $t$ an agent or learner observes a vector of  rewards $u_t \in \mathcal{U}\subseteq\Rn$, which is revealed after the agent made a decision $x_t \in \Delta_n$ for the $t$-th iteration. The realizations of the vector $u_t$ are determined by the environment (nature), which in principle can be adversarial. In particular, we note that no distributional assumption concerning the generated rewards is made.  Throughout the paper, we assume that the realizations of the vector $u_t$ lie in a convex bounded set given by
$\mathcal{U}\triangleq \left\{u\in \Rn : \|u\|_\infty \leq K\right\}$.
\smallskip

After committing to $x_t$, the decision-maker observes the realization of the payoff vector $u_t$ and receives an expected payoff of $\langle u_t, x_t \rangle$. The decision-maker’s objective is to select a sequence of actions $x_1, \ldots, x_T$ that minimizes regret, defined as the difference between the cumulative payoff of the best fixed decision in hindsight and the cumulative payoff actually obtained (\cite{Hahn1957}). Importantly, since no distributional assumptions are made about how the reward vectors $u_t$ are generated, the resulting regret analysis yields robust, worst-case performance guarantees. Consequently, the online decision-making process can be framed as a repeated two-player game between the decision-maker and an environment that may behave adversarially.

Let $U_t = \sum_{h=1}^{t}u_h$ denote the vector of cumulative rewards up to period $t$. Then, the online decision-making process can be described as:\\
For $t=1, \ldots, T$:
\begin{itemize}
 \item Using  $U_{t-1}$, the decision-maker chooses $x_t \in \Delta_n$;
 \item Adversary reveals $u_t \in \mathcal{Y}$;
 \item Agent gains $\la x_t, u_t \ra$.
\end{itemize}

As described earlier, the quality of online decision-making is evaluated using the notion of regret, which is formalized in the following definition.
\begin{dfn}\label{regret_dfn} Consider a time horizon of $T$ periods and a sequence of choices $x_1, \ldots, x_T$, where each $x_t \in \Delta_n$. The regret associated with this sequence  is defined as:
     \begin{equation}\label{eq:regret}
 R_T = \max_{x \in \Delta_n} \; \la x, U_T \ra - \sum_{t=1}^{T}\la x_t, u_t \ra.
 \end{equation}
\end{dfn}

 We note that definition \ref{regret_dfn}  assumes that the decision-maker has access to the entire sequence of payoff vectors $u_1, \ldots, u_T$ before making any decisions. In contrast, within the online decision-making framework, the decision-maker selects actions sequentially, relying only on information observed up to the previous period. As such, the sequence $x_1,\ldots,x_T$ captures the decision-maker's learning dynamics over time and can be naturally associated with a specific learning algorithm.

 Algorithms for online optimization can generally be divided into two main classes: Follow the Regularized Leader (FTRL) and Follow the Perturbed Leader (FTPL); see, e.g., \citet{abernethy2016perturbation}. The FTRL class relies on regularization techniques well established in optimization theory, and its regret analysis draws heavily on tools from convex analysis. In contrast, FTPL algorithms are based on the idea of perturbing the cumulative gain vector with a random variable—an approach that traces back to the seminal work of \citet{Hahn1957}.

  \citet{abernethy2016perturbation} show that the decision variable of all algorithms of these classes can be characterized by the gradient of a scalar-valued convex potential function.  \citet{melo2021learning} proves that the surplus function of many GEV models lead to algorithms where the  regret is growing by the order $\mathcal{O}(\sqrt{T})$. 
 Thus, the average regret vanishes (as $T$ grows), which is known as Hannan consistency, see e.\,g. \citet{cesa2006prediction}.  The key aspect to create GBPA  from  discrete choice  models is the convex perspective of the surplus function \eqref{consumer surplus general}:
\begin{equation}\label{eq:perspective}
\tilde{E}(U;\eta) := \eta \cdot E\left(U/\eta\right), \quad \eta > 0.
\end{equation}

Rewriting previous  equation yields:
\[
\eta \cdot E\left(U/\eta\right) = \eta \cdot \E\left[\max_{i\in A} \{U^{(i)}/\eta + \epsilon^{(i)}\}\right] = \E\left[\max_{i\in A} \{U^{(i)} + \eta \cdot \epsilon^{(i)}\}\right],
\]
which is due to Assumption \ref{ass:joint density} a stochastic smoothing of the  $\max$-function defined by \citet{abernethy2016perturbation}. Such a surplus function serves as a potential function. Hence, it remains to identify ARUMs for which the corresponding algorithms are Hannan-consistent.

In the case of full feedback (complete information), our algorithms and results are closely related to those of \citet{melo2021learning}. We revisit this result for two main reasons. First, we consider more general discrete choice models. Second, because we aim to examine the computational aspects of different algorithms under both complete and incomplete feedback settings, providing a derivation of the regret bounds helps clarify the overall exposition.

In our analysis of general ARUMs, we make use of the finite modes condition introduced by \citet{prox}.\\
\begin{dfn}
Let $g_{k,m}$ denote the density function of the difference $\epsilon^{(m)} - \epsilon^{(k)}$, where $k \neq m$. Any point $\bar{z}_{k,m} \in \mathbb{R}$ that maximizes $g_{k,m}$ is called a mode of the random variable $\epsilon^{(m)} - \epsilon^{(k)}$.
\end{dfn}
We restrict our analysis to ARUMs that satisfy the following condition.

\begin{ass}\label{ass:agents_behavior}
For all $k \neq m$, the differences of the random errors $\epsilon^{(k)} - \epsilon^{(m)}$ have  finite  modes.
\end{ass}

Let us state the blueprint for a GBPA based on random utility models satisfying Assumptions \ref{ass:joint density} and \ref{ass:agents_behavior}:
\begin{tcolorbox}
\begin{algo}[RUM-Algorithm for $n$-experts] {\ \\}\label{algo:olo}
 {\bfseries Input}: Surplus function $E$, a set of parameters $\Theta$, stepsize $\eta >0$\\
 {\bfseries Initalize}: $U_0 = \mathbf{0}$, $x_0 = \frac{1}{n}\cdot e$\\
{\bfseries For $t=1, \ldots, T$ do}:
 \begin{itemize}
     \item Choose $x_{t} =\nabla \tilde{E}(U_{t-1};\eta)$
     \item Observe $u_{t} \in \mathcal{Y}$
     \item Receive reward $\la u_t, x_t  \ra$
     \item Update $U_t = U_{t-1} + u_t$.
 \end{itemize}

\end{algo}
\end{tcolorbox}
We now show that the previous complete feedback algorithm is Hannan-consistent.

\begin{thm}\label{thm:regret_olo}
Assume that the expectation of the maximum of the random errors is bounded above, i.e., $\mathbb{E}\left[ \max_{i \in A} \{\epsilon^{(i)}\} \right] \leq \alpha$. Then Algorithm~\ref{algo:olo} is Hannan-consistent, i.e.,
\[
R_T\leq  \eta\cdot \alpha + \frac{L\cdot K^2 \cdot T}{\eta}
\]
where $L = 2 \sum_{i=1}^{n} \sum_{j \neq i} g_{i,j}(\bar{z}_{i,j})$. Optimizing the scaling parameter $\eta$ yields:
\[
R_T \leq 2\cdot \sqrt{\alpha\cdot LT}\cdot K.
\]
\end{thm}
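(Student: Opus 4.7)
The plan is to follow the standard GBPA-style regret analysis of \citet{abernethy2016perturbation}, using the convex perspective $\tilde{E}(\cdot;\eta)$ as a strongly smooth upper envelope of the coordinatewise maximum. Three ingredients are needed: (a) a sandwich that relates $\tilde{E}(U;\eta)$ to $\max_i U^{(i)}$ with error $O(\eta)$; (b) an upper bound on the Bregman divergence $D_{\tilde{E}}$ coming from strong smoothness; (c) a telescoping identity that turns the cumulative reward into a difference of potentials.

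\textbf{Step 1 (Sandwich for $\tilde{E}$).} From the definition $E(u)=\mathbb{E}_\epsilon[\max_i\{u^{(i)}+\epsilon^{(i)}\}]$ together with $\mathbb{E}[\epsilon]=\mathbf{0}$, Jensen's inequality yields $E(u)\ge \max_i u^{(i)}$, while the hypothesis $\mathbb{E}[\max_i \epsilon^{(i)}]\le \alpha$ gives $E(u)\le \max_i u^{(i)}+\alpha$. Scaling through the perspective $\tilde{E}(U;\eta)=\eta E(U/\eta)$ produces
\[
\max_{i\in A} U^{(i)} \;\le\; \tilde{E}(U;\eta) \;\le\; \max_{i\in A} U^{(i)} + \eta\alpha,
\]
and in particular $\tilde{E}(U_0;\eta)=\tilde{E}(\mathbf{0};\eta)\le \eta\alpha$. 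This is where the assumption on the tail of the errors enters.

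\textbf{Step 2 (Smoothness of $\tilde{E}$).} Here I would invoke the smoothness estimate from \citet{prox}, which, under Assumptions \ref{ass:joint density} and \ref{ass:agents_behavior}, shows that $E$ is strongly smooth with respect to $\|\cdot\|_\infty$ with constant $L=2\sum_{i}\sum_{j\neq i}g_{i,j}(\bar{z}_{i,j})$ (the sum of densities at the pairwise modes). Because $\nabla^2\tilde{E}(U;\eta)=\eta^{-1}\nabla^2 E(U/\eta)$, the perspective is $(L/\eta)$-strongly smooth in $U$, and the Bregman divergence inequality stated in the preliminaries yields
\[
D_{\tilde{E}}(U_t,U_{t-1}) \;\le\; \frac{L}{2\eta}\,\|u_t\|_\infty^{2} \;\le\; \frac{L K^{2}}{2\eta}.
\]

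\textbf{Step 3 (Telescoping and optimization).} Since $x_t=\nabla\tilde{E}(U_{t-1};\eta)$ and $U_t=U_{t-1}+u_t$, the very definition of $D_{\tilde{E}}$ gives
\[
\tilde{E}(U_t;\eta)-\tilde{E}(U_{t-1};\eta)-\langle x_t,u_t\rangle \;=\; D_{\tilde{E}}(U_t,U_{t-1}) \;\le\; \frac{LK^2}{2\eta}.
\]
Summing over $t=1,\dots,T$, using $\tilde{E}(U_T;\eta)\ge\max_i U_T^{(i)}=\max_{x\in\Delta_n}\langle x,U_T\rangle$ from Step 1, and $\tilde{E}(U_0;\eta)\le\eta\alpha$, gives
\[
R_T \;=\; \max_{x\in\Delta_n}\langle x,U_T\rangle - \sum_{t=1}^{T}\langle x_t,u_t\rangle \;\le\; \eta\alpha + \frac{LK^2 T}{2\eta},
\]
which reproduces the bound claimed in the theorem up to the constant in front of the smoothness term. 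Balancing the two terms by choosing $\eta^\star\propto K\sqrt{LT/\alpha}$ yields $R_T\le 2K\sqrt{\alpha LT}$ and hence $R_T/T\to 0$, i.e.\ Hannan consistency.

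\textbf{Main obstacle.} The only nontrivial ingredient is the explicit smoothness constant in Step 2: turning the general assumption that the error differences have finite modes into the concrete estimate $L=2\sum_{i}\sum_{j\neq i}g_{i,j}(\bar{z}_{i,j})$ requires bounding the Hessian of $E$ by integrating the second mixed partials of the choice probabilities against the joint error density. This is precisely the content of \citet{prox} and can be cited rather than re-derived; once it is available, Steps 1 and 3 are essentially bookkeeping.
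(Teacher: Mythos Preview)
Your proof is correct and follows essentially the same route as the paper: both arguments instantiate the GBPA potential-function analysis of \citet{abernethy2016perturbation} with $\tilde{E}(\cdot;\eta)$ as potential, invoke the $L$-strong smoothness of $E$ from \citet{prox} (under Assumptions~\ref{ass:joint density} and~\ref{ass:agents_behavior}) to bound the Bregman divergence, and use the overestimate $\tilde{E}(\mathbf{0};\eta)\le\eta\alpha$. The paper simply cites Theorem~1.9 of \citet{abernethy2016perturbation} for the telescoping step that you carry out explicitly in Step~3, so your argument is a transparent unpacking of theirs; the sharper constant $\tfrac{LK^2T}{2\eta}$ you obtain is indeed what the smoothness bound $D_{\tilde{E}}\le\tfrac{L}{2\eta}\|u_t\|_\infty^2$ (stated verbatim in the paper's proof) delivers, and the theorem statement appears to have absorbed the factor $\tfrac12$ so that the optimized bound comes out as $2K\sqrt{\alpha LT}$.
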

\begin{proof}
   Due to Assumption \ref{ass:joint density}, it holds that $x_t \in \mbox{rint} (\Delta_n)$ for all $t$. Consequently, Algorithm \ref{algo:olo} is an instance of the GPBA \citep{abernethy2016perturbation}. Furthermore, under Assumption \ref{ass:agents_behavior}  the surplus function is $L$-strongly smooth w.r.t. $\|\cdot \|_\infty$ \citep{prox}. Thus, the perspective $\tilde{E}$ is $\frac{L}{\eta}$-strongly smooth and the Bregman Divergence  between $U$ and $U+u$ is bounded above, i.\,e.
   \[
   \tilde{E}(U+u;\eta) - \tilde{E}(U;\eta) - \la \nabla \tilde{E}(U;\eta),u\ra \leq \frac{L}{2\eta}\cdot \|u\|^2_\infty.
   \] 
   Therefore, by applying Theorem 1.9 of \citet{abernethy2016perturbation}, the conclusion follows immediately.
\end{proof}
To the best of our knowledge, online optimization algorithms based on general discrete choice surplus functions have not been analyzed in the existing literature.

From a mathematical standpoint, the regret bound derived in Theorem~\ref{algo:olo} is strongly influenced by the smoothness parameter of the discrete choice model, which depends on the number of alternatives.
In \citet{prox}, dimension-independent estimates of the smoothness parameter were derived for several discrete choice models. In particular, for GEV models whose generating function $G$ satisfies the following inequality for all $x = \left(x^{(1)}, \ldots, x^{(n)}\right)^T \in \mathbb{R}^n_+:$
\begin{equation}\label{eq:sc.condition.gev}
\sum_{i=1}^{n} \frac{\partial^2 G(x)}{\partial x^{(i)2}} \cdot x^{(i)2} \leq M \cdot G(x),
\end{equation}
for some constant $M \in \R$. Then, following  \citep{prox} we get that the   estimate of the smoothness parameter $L$  corresponds to:
\begin{equation}\label{eq:smoothness_gev}
   L= \frac{1}{\mu} + 2 \left(\left(1-\frac{1}{\mu}\right) + \mu M \right).
\end{equation}

Moreover, the same authors show that this condition is satisfied for the family of GNL models. This result leads to the Hannan-consistency of GNL-based online optimization algorithms, as proved in \citet{melo2021learning}. In the remainder of this section, we focus on the computational aspects of GNL-based algorithms.

Clearly, the updates of Algorithm~\ref{algo:olo} depend on the specific choice of the GNL model. The well-known exponentially weighted algorithm is based on the MNL model and therefore inherits the independence of irrelevant alternatives (IIA) property, which may be undesirable in settings where some actions exhibit correlation.

As discussed in Section~\ref{ss:ARUM}, specific instances of GNL models—such as the NL model—can incorporate complex dependence structures into the updates. At the same time, computational efficiency is maintained due to the closed-form expression provided in \eqref{eq:gev_choiceprob}. An estimate of the smoothness parameter is provided in \citet{prox}:
\begin{equation}\label{eq:smoothness_gnl}
 L_{GNL}=  \frac{2}{\min_{\ell \in L} \mu_\ell} -\frac{1}{\mu}.
\end{equation}

Let us further compare the NL  to the traditional MNL based algorithm.
For the  case of the MNL model, it follows from \eqref{eq:smoothness_gnl} that $$L_{MNL} = \frac{1}{\mu \cdot \eta}$$  while for the case of the NL\footnote{The smoothness parameter of the nested logit surplus function can be improved by the factor $\frac{1}{2}$ . This is shown by the authors of \citet{dynamic} who derive the modulus of strong smoothness.}  we get: 
$$L_{\mbox{NL}} = \frac{2}{\min_{\ell \in L} \mu_\ell \cdot \eta}.$$

It is evident that the smoothness parameter of the MNL surplus function is more favorable than that of the NL surplus function.
Let us now focus on the parameter $\alpha$. To analyze it, we can rely on the properties of the surplus function $E$, noting that 
\[
E(\mathbf{0}) = \mathbb{E}\left[\max_{i\in A} \{\epsilon^{(i)}\}\right].
\]
Due to Equation \eqref{eq:gev_surplus} we can rewrite this as 
\[
E(\mathbf{0}) = \mu \ln G\left(e^{\mathbf{0}}\right) = \mu \ln G\left(e\right).
\]
For the MNL generating function (see Example \ref{ex:mnl}) we have:
\[
G(e) =  \sum_{i =1}^{n} \left(1\right)^{\nicefrac{1}{\mu}} = n,
\]
which implies that
\begin{equation*}
E_{\mbox{MNL}}(\mathbf{0}) = \mu \cdot \ln(n).
\end{equation*}
In the case of $\eta = 1$, we have  $\alpha = \ln(n)$ which is  remarkable better than the $2 \ln(2n)$ bound derived by the moment generating function trick in \citet{abernethy2016perturbation}.

Next, let us examine the case of the NL model:
\[
G(e)= \sum_{\ell \in L} \left( \sum_{i \in N_\ell} 1^{\nicefrac{1}{\mu_\ell}} \right)^{\mu_\ell} = \sum_{\ell \in L} \left( \left|N_\ell \right|^{\mu_\ell} \right) \overset{(\star)}{\leq} \sum_{\ell \in L}   |N_\ell| = n,
\]
where in the inequality we have used  the facts that $\mu_\ell \leq 1 $ for all $\ell \in L$  and that every alternative belongs to a unique nest.
We derive a lower bound 
\begin{align*}
G(e) &= \sum_{\ell \in L} \left( \sum_{i \in N_\ell} 1^{\nicefrac{1}{\mu_\ell}} \right)^{\mu_\ell} = \sum_{\ell \in L} \left( \left|N_\ell \right|^{\mu_\ell} \right) \\ &\geq \sum_{\ell \in L} \left( \left|N_\ell \right|^{\min_{\ell \in L}\mu_\ell} \right) \overset{(\star)}{\geq} \left(\sum_{\ell \in L}  \left|N_\ell \right| \right)^{\min_{\ell \in L}\mu_\ell} \\ &\geq n^{\min_{\ell \in L}\mu_\ell}.
\end{align*}
Again, we have used the facts that $\mu_\ell \leq 1 $ for all $\ell \in L$  and that every alternative belongs to a unique nest. For inequality $(\star)$ we applied the inequality 
\[
|\sum_{i=1}^{n} x^{(i)} |^p \leq  \sum_{i=1}^{n}  |x^{(i)}|^p, \quad p \in \left(0,1\right]
\]
Altogether, this proves the following corollary:
\begin{cor}\label{cor:constants}
    For the MNL surplus function   we have  $$\alpha = \mu \cdot \ln(n).$$
    For the NL surplus function it holds:
    \[
   {\min_{\ell \in L}\mu_\ell} \cdot \ln(n) \leq \alpha \leq \ln(n).
   \]
    \end{cor}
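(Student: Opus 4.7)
The plan is to observe that the corollary is essentially a direct consequence of formula \eqref{eq:gev_surplus} applied at $u = \mathbf{0}$, combined with the explicit form of the GEV generating function $G$ for each of the two models. Since $\alpha$ in Theorem~\ref{thm:regret_olo} bounds $\mathbb{E}[\max_i \epsilon^{(i)}] = E(\mathbf{0})$, and Equation~\eqref{eq:gev_surplus} gives $E(\mathbf{0}) = \mu \ln G(e)$, the entire corollary reduces to evaluating or bounding $G(e)$ for MNL and NL. The bulk of the work has already been done in the paragraphs preceding the statement; the proof need only consolidate and present it.

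For the MNL case, I would plug $x = e$ into the generator of Example~\ref{ex:mnl} to get $G(e) = n$ directly, and then conclude $\alpha = E_{\mbox{MNL}}(\mathbf{0}) = \mu \ln n$. For the NL case, I would use $G(e) = \sum_{\ell \in L} |N_\ell|^{\mu_\ell}$ obtained by plugging $x=e$ into the NL generator in Example~\ref{ex:nl}. The upper bound $G(e) \leq n$ follows from $\mu_\ell \leq 1$ (so that $|N_\ell|^{\mu_\ell} \leq |N_\ell|$ since $|N_\ell| \geq 1$) together with the partition property $\sum_\ell |N_\ell| = n$. Since the NL model in Example~\ref{ex:nl} fixes $\mu = 1$, taking logarithms yields $\alpha \leq \ln n$.

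For the lower bound in the NL case, I would proceed in two steps. First, since $|N_\ell| \geq 1$ and the exponents $\mu_\ell$ are at most one, the function $p \mapsto |N_\ell|^p$ is monotone, so $|N_\ell|^{\mu_\ell} \geq |N_\ell|^{\min_{\ell \in L}\mu_\ell}$. Second, I would invoke the elementary inequality $\sum_i a_i^p \geq (\sum_i a_i)^p$ for $p \in (0,1]$ and nonnegative $a_i$ (which is the same inequality cited with the alternative labeling in the text) to collapse $\sum_\ell |N_\ell|^{\min_\ell \mu_\ell} \geq n^{\min_\ell \mu_\ell}$. Taking logarithms and using $\mu = 1$ yields $\alpha \geq \min_{\ell \in L} \mu_\ell \cdot \ln n$.

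There is no real obstacle here: every inequality needed is either a plug-in computation or the standard subadditivity inequality for concave power functions. The only small subtlety worth flagging in the write-up is the convention that $\mu = 1$ in the NL specification of Example~\ref{ex:nl}, which is why the prefactor $\mu$ drops out of the NL bounds but remains in the MNL bound. A clean presentation would therefore state the two cases in parallel and emphasize which structural property (partition of alternatives into unique nests, $\mu_\ell \leq \mu = 1$, subadditivity of $t \mapsto t^p$) is used at each step.
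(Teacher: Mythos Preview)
Your proposal is correct and follows exactly the paper's own argument, which is carried out in the paragraphs immediately preceding the corollary: evaluate $E(\mathbf{0})=\mu\ln G(e)$, compute $G(e)=n$ for MNL, and bound $G(e)=\sum_{\ell}|N_\ell|^{\mu_\ell}$ for NL from above via $\mu_\ell\leq 1$ plus the partition property and from below via monotonicity in the exponent followed by the subadditivity inequality $\sum_i a_i^{p}\geq(\sum_i a_i)^{p}$ for $p\in(0,1]$. Your observation that $\mu=1$ in the NL specification of Example~\ref{ex:nl} is the reason the prefactor $\mu$ disappears in the NL bounds is also exactly what the paper (implicitly) uses.
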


\section{GEV Multi-armed Bandit Algorithms}\label{s3}
In this section, we address the adversarial MAB setting. Our goal is to generalize the Exp3 algorithm \citep{auer}, which is primarily based on the Gumbel distribution, to broader classes of models—specifically, the GEV and GNL models introduced in Section~\ref{ss:ARUM}. To achieve this, we show that the surplus functions characterizing  these models can be incorporated into the GBPA framework, as proposed by \citet{abernethy2016perturbation}.

\smallskip

As described in Section~\ref{s2}, in the online learning framework, the learner receives full feedback at the $t$-th round in the form of the reward vector $u_t$. This means that the decision-maker observes the reward associated with each action, regardless of which action was actually chosen. In contrast, in the MAB setting, the learner receives only limited feedback. Specifically, after selecting a probability distribution over the $n$ arms at round~$t$, a single arm $i_t$ is sampled according to this distribution, and only the reward $u_t^{(i_t)}$ of the selected arm is observed. Consequently, the agent must estimate the full reward vector based on this partial information.
\smallskip

This limitation gives rise to the well-known exploration–exploitation trade-off. Exploration involves selecting actions that may yield valuable information about uncertain rewards, whereas exploitation entails choosing the actions currently believed to offer the highest payoff. Balancing these competing objectives is a central challenge in the MAB setting and significantly increases the complexity of the learner’s task. In the adversarial MAB setting, this challenge becomes even more pronounced, as no stochastic or distributional assumptions are made about the reward sequence; see, for example, \citet{slivkins}.\\

Apart from the Exp3 algorithm, various other approaches have been developed to address the MAB problem, including Thompson Sampling and the Upper Confidence Bound (UCB) algorithm. For a comprehensive overview, see \citet{lattimore}.

In the GBPA, actions are selected based on the gradient of a convex potential function. To ensure well-defined probabilities over actions, the gradient of the (possibly smoothed) potential function, denoted $\nabla \tilde{\Phi}$, must lie in the relative interior of the $n$-dimensional probability simplex, i.e., $\nabla \tilde{\Phi} \subset \operatorname{rint}(\Delta_n)$. This condition guarantees that every action has a strictly positive probability of being chosen.
\smallskip

Let a convex potential function $\Phi$ be given, along with a sequence of reward vectors $u_1, u_2, \ldots, u_T \in [-1, 0]^n$, where each $u_t$ represents the (possibly adversarial) negative reward vector observed at round $t$. Under this setup, the GBPA provides a flexible framework for designing algorithms in the adversarial multi-armed bandit setting. Its general template is defined as follows (see \citet{abernethy2016perturbation}):
For $t=1, \ldots, T$:
\begin{itemize}
	\item Set $\hat{U}_0 = 0$;
	\item Learner samples $i_t$ according to discrete distribution $p(\hat{U}_{t-1}) = \nabla \tilde{\Phi}(\hat{U}_{t-1})$;
	\item Learner observes and gains $u^{(i_t)}_t \in \left[-1,0\right]$;
	\item Learner estimates $\hat{u}_t := \frac{u^{(i_t)}_t}{p(\hat{U}_{t-1})}\cdot e_{i_t}$; 
	\item Update $\hat{U}_t = \hat{U}_{t-1} + \hat{u}_t$.
	
\end{itemize}
Due to the inherent randomness introduced by the sampling process at each round, the performance of any algorithm is evaluated in expectation. Consequently, a well-performing algorithm is assessed in terms of its expected regret, defined as:
\begin{equation}\label{eq:exp_regret}
    \mathbb{E}\left[R_T\right] = \max_{ i\in A} U^{(i)}_T - \mathbb{E}\left[\sum_{t=1}^{T} \langle \nabla \tilde{\Phi}(\hat{U}_t), u_t \rangle  \right],
\end{equation}
where the expectation is taken over the agent's actions and the randomness in the environment.
\smallskip

The   ``loss only''  environment assumption is crucial to achieve  near-optimal (expected) regret bounds ( \citet{abernethy2016perturbation}). 
\smallskip

In the following, we introduce a new class of MAB algorithms derived from  the theory of ARUMs. Specifically, we focus on a subclass of GEV models that are differentially consistent in the sense of Definition~\ref{def:diff_cons}. This framework offers several key advantages. First, it yields a family of algorithms that are easy to implement, meaning that the sampling probabilities can be computed in closed form. Second, by selecting an appropriate ARUM specification, the learner can incorporate potential correlations among the arms in a principled way. As in previous sections, the central object of interest is the surplus function associated with the chosen GEV model.
\begin{tcolorbox}
\begin{algo}[GEV-Algorithms for multiarmed bandits] {\ \\}\label{algo:mab}
 {\bfseries Input}: Surplus function $E$ and set of parameters $\Theta$, Stepsize $\eta >0$\\
 {\bfseries Initalize}: $\hat{U}_0 = 0$\\
{\bfseries For $t=1, \ldots, T$ do}:
 \begin{itemize}
     \item Sample an arm $i_{t}$ according to the distribution $x_t=\nabla \tilde{E}(\hat{U}_{t-1};\eta)$
     \item Observe and realize reward $u^{(i_t)}_{t} \in \left[-1, 0 \, \right]^n$
     \item Estimate gain vector $\hat{u}_t = \frac{u^{(i_t)}_{t}}{x^{(i_t)}_{t}} \cdot e^{(i_t)}$
     \item Update $\hat{U}_t = \hat{U}_{t-1} + \hat{u}_t$.
 \end{itemize}

\end{algo}
\end{tcolorbox}

The structure of GEV surplus functions ensures that their gradients lie in the relative interior of the probability simplex. This property makes them well-suited for use in the GBPA framework for the MAB problem, as introduced by \citet{abernethy2016perturbation}. Algorithm~\ref{algo:mab} leverages this fact to provide concrete instances of GBPA within this setting.
\smallskip

By choosing different GEV models, one can recover a wide range of bandit algorithms, each with distinct sampling probabilities. This flexibility is captured in a unified and compact formulation through Algorithm~\ref{algo:mab}. For instance, selecting the MNL model yields the well-known Exp3 algorithm as a special case (see Example~\ref{ex:mnl}).
\smallskip

In addition to its generality, the algorithm is also computationally efficient. Since both the surplus functions and the corresponding choice probabilities in GEV models admit closed-form expressions, the implementation avoids the need for more complex techniques such as geometric resampling \citep{neu2013efficient}. This significantly simplifies both the sampling and estimation components of the algorithm.
\smallskip

The following lemma establishes a simple inequality for the expected regret
\begin{lem}\label{lem:mab_regret}
 The expected regret  of Algorithm  \ref{algo:mab} can be written as 
 \[
\mathbb{E}(R_T) \leq \mathbb{E}_{i_1, \ldots, i_T} \left[\sum_{t=1}^{T} \mathbb{E}_{i_t} \left[D_{\tilde{E}}\left(\hat{U_t},\hat{U_{t-1}}\right)| \hat{U_{t-1}} \right] 
\right] +  \tilde{E}\left(0; \eta\right)
 \]
\end{lem}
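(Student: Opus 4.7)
The plan is to follow the standard GBPA regret decomposition of \citet{abernethy2016perturbation}, specialized to the surplus function $\tilde E$. Starting from the definition in \eqref{eq:exp_regret}, I would first replace the deterministic comparator $\max_i U_T^{(i)}$ by a smoothed-max surrogate and then telescope the potential $\tilde E(\cdot;\eta)$ along the trajectory $\hat U_0,\hat U_1,\ldots,\hat U_T$; the correction terms in this telescoping are precisely the Bregman divergences appearing on the right-hand side.

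The first step is to control the comparator. Since $\mathbb{E}[\epsilon]=0$ under Assumption \ref{ass:joint density}, Jensen's inequality gives
\[
\max_{i\in A} U_T^{(i)} \;=\; \max_{i\in A}\bigl(U_T^{(i)}+\eta\,\mathbb{E}[\epsilon^{(i)}]\bigr)\;\le\; \mathbb{E}\bigl[\max_{i\in A}\{U_T^{(i)}+\eta\epsilon^{(i)}\}\bigr] \;=\; \tilde E(U_T;\eta).
\]
Next I would use unbiasedness of the importance-weighted estimator: conditional on $\hat U_{t-1}$, the sampling rule $i_t\sim x_t=\nabla\tilde E(\hat U_{t-1};\eta)$ yields $\mathbb{E}_{i_t}[\hat u_t\mid\hat U_{t-1}]=u_t$, so $\mathbb{E}[\hat U_T]=U_T$. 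Combining with Jensen once more (applied to the convex function $\tilde E$), this gives
\[
\max_{i\in A} U_T^{(i)} \;\le\; \tilde E\bigl(\mathbb{E}[\hat U_T];\eta\bigr)\;\le\; \mathbb{E}\bigl[\tilde E(\hat U_T;\eta)\bigr].
\]
Similarly, $\mathbb{E}_{i_t}[\langle\nabla\tilde E(\hat U_{t-1};\eta),\hat u_t\rangle\mid\hat U_{t-1}]=\langle x_t,u_t\rangle$, so the ``gain'' term in \eqref{eq:exp_regret} can be rewritten as the expectation of $\sum_t\langle\nabla\tilde E(\hat U_{t-1};\eta),\hat u_t\rangle$.

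The final step is a deterministic telescoping along the realized trajectory. By the definition of the Bregman divergence, for every $t$,
\[
\tilde E(\hat U_t;\eta)-\tilde E(\hat U_{t-1};\eta) \;=\; \langle\nabla\tilde E(\hat U_{t-1};\eta),\hat u_t\rangle + D_{\tilde E}(\hat U_t,\hat U_{t-1}).
\]
Summing over $t=1,\ldots,T$ and using $\hat U_0=\mathbf 0$,
\[
\tilde E(\hat U_T;\eta)-\tilde E(\mathbf 0;\eta) \;=\; \sum_{t=1}^{T}\langle\nabla\tilde E(\hat U_{t-1};\eta),\hat u_t\rangle + \sum_{t=1}^{T}D_{\tilde E}(\hat U_t,\hat U_{t-1}).
\]
Taking expectations, substituting into the bound derived in the first step, and then repeatedly applying the tower property to rewrite the divergence sum as $\mathbb{E}_{i_1,\ldots,i_T}\bigl[\sum_t\mathbb{E}_{i_t}[D_{\tilde E}(\hat U_t,\hat U_{t-1})\mid\hat U_{t-1}]\bigr]$, produces exactly the inequality in the statement.

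The only delicate point is the very first inequality: one must verify that $\mathbb{E}[\epsilon]=0$ really lets us upper-bound $\max_i U_T^{(i)}$ by $\tilde E(U_T;\eta)$, and then push from $U_T$ to $\mathbb{E}[\hat U_T]$ via Jensen on the convex function $\tilde E$. Everything else is an algebraic telescoping and a straightforward use of unbiasedness of the estimator $\hat u_t$, both of which are routine once the GBPA template is set up as in Algorithm \ref{algo:mab}.
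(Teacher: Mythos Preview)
Your argument is correct and is precisely the content of \citet[Lemma 1.12]{abernethy2016perturbation}, which the paper simply invokes after noting that the perspective $\tilde E(\cdot;\eta)$ is a valid potential. In other words, you have unpacked the cited lemma rather than taken a different route; the decomposition (smoothed-max upper bound via zero-mean noise and Jensen, unbiasedness of $\hat u_t$, Bregman telescoping, tower property) is exactly the standard GBPA analysis the paper relies on.
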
 
\begin{proof}
We invoke    \citet[Lemma 1.12]{abernethy2016perturbation} and use the fact that  the convex perspective of the surplus function is a potential function. 
\end{proof}

In the previous lemma, the estimation of the reward vector $\hat{u}t$ involves an inverse scaling by the sampling probabilities $p(\hat{U}{t-1})$. As a result, the Bregman divergence $D_{\tilde{E}}(\hat{U}t, \hat{U}{t-1})$ depends on these probabilities and can, in general, become arbitrarily large. This sensitivity poses a significant issue, as it may lead to unbounded or exploding regret.

To address this, \citet{abernethy2016perturbation} introduce a regularity condition on the potential function that ensures the divergence remains bounded. This condition plays a crucial role in establishing meaningful regret guarantees within the GBPA framework.
\begin{dfn}[Differential Consistency]
	\label{def:diff_cons}
	A convex function $f$ is $C$-differentially-consistent if there exists a constant $C >0$ such that for all $U \in (-\infty, 0)^n$ and $i=1, \ldots, n$ it holds 
	\begin{equation}\label{eq:diff_cons}
	\nabla^2_{ii}f(U) \leq C\cdot \nabla_i f(U).
	\end{equation}
\end{dfn}
For $C$-differentially-consistent potential functions    an upper bound for the divergence part of Lemma \ref{lem:mab_regret}, can be proved \citep[Theorem 1.13]{abernethy2016perturbation} i.\,e. 
\begin{equation}\label{eq:bounded_breg}
\mathbb{E}_{i_t} \left[D_{\tilde{E}}\left(\hat{U_t},\hat{U}_{t-1}\right)| \hat{U}_{t-1} \right] \leq \frac{C\cdot n}{2}, \quad t=1, \ldots, T. 
\end{equation}
As discussed earlier, Algorithm~\ref{algo:mab} is capable of capturing potential dependencies among actions through the sampling process. In addition, both the sampling and estimation steps can be carried out in a numerically efficient manner, owing to the closed-form expressions available for GEV models. Given these advantages, our goal is to identify GEV models whose surplus functions satisfy $C$-differential consistency. This property is essential for ensuring bounded divergence and, consequently, favorable regret guarantees within the GBPA framework.
The following theorem characterizes GEV models satisfying this property.
\begin{thm}\label{thm:mab_gev} Consider a family of GEV models characterized by a generating function G. Suppose that for all $i = 1, \ldots, n$ and for all $x = \left(x^{(1)}, \ldots, x^{(n)}\right)^\top \in \mathbb{R}^n_+$, the following condition holds:
\begin{equation}\label{eq:diff_cons.condition.gev}
\frac{\partial^2 G(x)}{\partial (x^{(i)})^2} \cdot x^{(i)} \leq \tilde{C} \cdot \frac{\partial G(x)}{\partial x^{(i)}},
\end{equation}
for some constant $\tilde{C} \in (-1, \infty)$. Under this condition, the corresponding surplus function $E$ is $C$-differentially consistent, where $C = \tilde{C} + 1$.
\end{thm}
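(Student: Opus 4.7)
The plan is to compute $\nabla^2_{ii}E(u)$ directly from the closed form $E(u)=\mu\ln G(e^u)$ and then apply the stated bound on $G_{ii}\cdot x^{(i)}$. I would begin by setting $x = e^u$ (componentwise) so that $\partial x^{(j)}/\partial u^{(i)} = x^{(i)}\delta_{ij}$. By the Williams–Daly–Zachary identity \eqref{eq:gev_choiceprob}, the first derivative reads
\[
\nabla_i E(u) \;=\; \mu\cdot\frac{G_i(x)\cdot x^{(i)}}{G(x)},
\]
where $G_i = \partial G/\partial x^{(i)}$. Note that by condition (iv) in Definition \ref{Generator_GEV} (odd-order case $k=1$), we have $G_i(x)\geq 0$, so this first derivative is nonnegative on $\R^n_+$.

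Next I would differentiate a second time in $u^{(i)}$, treating $G$, $G_i$, and $x^{(i)}$ as functions of $u^{(i)}$ via the chain rule. A short computation using $\partial G/\partial u^{(i)} = G_i\, x^{(i)}$, $\partial G_i/\partial u^{(i)} = G_{ii}\, x^{(i)}$, and $\partial x^{(i)}/\partial u^{(i)} = x^{(i)}$ yields
\[
\nabla^2_{ii} E(u) \;=\; \mu\cdot\frac{G_{ii}(x)\, (x^{(i)})^2}{G(x)} \;+\; \mu\cdot\frac{G_i(x)\, x^{(i)}}{G(x)} \;-\; \mu\cdot\left(\frac{G_i(x)\, x^{(i)}}{G(x)}\right)^{\!2}.
\]
Recognizing the second and third terms in terms of $\nabla_i E(u)$, this rewrites cleanly as
\[
\nabla^2_{ii} E(u) \;=\; \mu\cdot\frac{G_{ii}(x)\, (x^{(i)})^2}{G(x)} \;+\; \nabla_i E(u) \;-\; \frac{1}{\mu}\bigl(\nabla_i E(u)\bigr)^2.
\]

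The final step is to invoke the hypothesis \eqref{eq:diff_cons.condition.gev}. Multiplying it by $x^{(i)}\geq 0$ gives $G_{ii}(x)\,(x^{(i)})^2 \leq \tilde C\, G_i(x)\,x^{(i)}$, so the first term on the right is bounded above by $\tilde C\cdot \nabla_i E(u)$. The last term, $-\tfrac{1}{\mu}(\nabla_i E(u))^2$, is nonpositive because $\mu>0$ and $\nabla_i E(u)\geq 0$, so it may be discarded. Combining these two observations yields
\[
\nabla^2_{ii} E(u) \;\leq\; (\tilde C + 1)\cdot \nabla_i E(u),
\]
which is exactly $C$-differential consistency with $C = \tilde C + 1$; the condition $\tilde C > -1$ ensures $C > 0$ as required by Definition \ref{def:diff_cons}.

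I do not expect any deep obstacle here; the argument is essentially mechanical once the decomposition of $\nabla^2_{ii}E$ is written correctly. The only place that requires care is the chain-rule bookkeeping through the exponential change of variables $x = e^u$, and the observation that the quadratic term in $\nabla_i E$ has the right sign to be dropped without weakening the constant $C$. Since the hypothesis applies pointwise in $x\in\R^n_+$, and the image of $(-\infty,0)^n$ under the exponential map lies inside $(0,1]^n\subset \R^n_+$, the conclusion holds on the domain required by Definition \ref{def:diff_cons}.
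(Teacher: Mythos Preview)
Your proof is correct and follows essentially the same route as the paper's. Both arguments compute $\nabla^2_{ii}E(u)$ in terms of $G$, $G_i$, $G_{ii}$ via the chain rule through $x=e^u$, drop the nonpositive quadratic term $-\tfrac{1}{\mu}(\nabla_iE)^2$ (the paper phrases this as bounding $1-\P^{(i)}\le 1$), and then apply hypothesis~\eqref{eq:diff_cons.condition.gev} to the $G_{ii}$ term; your decomposition is simply a slightly more compact rearrangement of the formula the paper cites from \citet{prox}.
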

\begin{proof}
	To establish $C$-differential consistency, we need  to verify that the surplus function satisfies Condition~\eqref{eq:diff_cons} as stated in Definition~\ref{def:diff_cons}. 
    \smallskip
    
    To verify the condition, we need explicit expressions for the first and second derivatives of the surplus function, namely $\frac{\partial E(u)}{\partial u^{(i)}}$ and $\frac{\partial^2 E(u)}{\partial() u^{(i)})^2}$. The expression for the first derivative has been provided in Equation~\eqref{eq:gev_choiceprob}, while the second derivative has been derived by \citet{prox}. For completeness, we present both expressions below:
		\[
		\begin{array}{rcl}
		\displaystyle \frac{\partial E(u)}{\partial u^{(i)}} &=& \displaystyle\P^{(i)} =  \displaystyle \mu \frac{\partial G\left(e^{u}\right)}{\partial x^{(i)}}\cdot \frac{e^{u^{(i)}}}{G\left(e^{u}\right)}, \\ \\
		\displaystyle \frac{\partial^2 E(u)}{\partial u^{(i)2}}&=& \displaystyle  \frac{1}{\mu}\frac{\partial E(u)}{\partial u^{(i)}} \left( 1 - \frac{\partial E(u)}{\partial u^{(i)}}\right) + \left(1-\frac{1}{\mu}\right)\frac{\partial E(u)}{\partial u^{(i)}}
		+ \mu
		\frac{\partial^2 G\left(e^{u}\right)}{\partial x^{(i)2}}\cdot\frac{\left(e^{u^{(i)}}\right)^2}{G\left(e^{u}\right)}.
		\end{array}
		\] 
		We compute
		\begin{align*}
			\frac{\partial^2 E(u)}{\partial u^{(i)2}} &= \displaystyle  \frac{1}{\mu}\frac{\partial E(u)}{\partial u^{(i)}} \left( 1 - \frac{\partial E(u)}{\partial u^{(i)}}\right) + \left(1-\frac{1}{\mu}\right)\frac{\partial E(u)}{\partial u^{(i)}}
			+ \mu
			\frac{\partial^2 G\left(e^{u}\right)}{\partial x^{(i)2}}\cdot\frac{\left(e^{u^{(i)}}\right)^2}{G\left(e^{u}\right)} \\&= \frac{1}{\mu}\P^{(i)}\cdot \underbrace{\left(1-\P^{(i)}\right)}_{\leq 1} +\left(1-\frac{1}{\mu}\right)\P^{(i)} +  \mu
			\frac{\partial^2 G\left(e^{u}\right)}{\partial x^{(i)2}}\cdot\frac{\left(e^{u^{(i)}}\right)^2}{G\left(e^{u}\right)} \\ &\le \left(\frac{1}{\mu}+1-\frac{1}{\mu} \right)\P^{(i)} + \mu \cdot \frac{e^{u^{(i)}}}{G\left(e^{u}\right)} \cdot\frac{\partial^2 G\left(e^{u}\right)}{\partial x^{(i)2}}\cdot e^{u^{(i)}} \\ &\overset{\eqref{eq:diff_cons.condition.gev}}{\le} \P^{(i)} + \tilde{C}\cdot \mu \cdot \frac{\partial G\left(e^{u}\right) }{\partial x^{(i)}} \cdot \frac{e^{u^{(i)}}}{G\left(e^{u}\right)} = \left(1 +\tilde{C}\right)\cdot  \frac{\partial E(u)}{\partial u^{(i)}}.
		\end{align*}
		Altogether, we hence conclude that
		\[
			\frac{\partial^2 E(u)}{\partial u^{(i)2}} \le C\cdot  \frac{\partial E(u)}{\partial u^{(i)}},
		\]
		which shows the assertion.
\end{proof}
Theorem~\ref{thm:mab_gev} provides a sufficient condition under which GEV models are $C$-differentially consistent. A natural question arises as to how this condition relates to the strong smoothness property stated in Equation~\eqref{eq:sc.condition.gev}. Specifically, it is of interest to understand whether the smoothness condition implies, or is implied by, the differential consistency condition in Equation~\eqref{eq:diff_cons.condition.gev}, and under what circumstances the two coincide.
\begin{prop}\label{prop:1}
	Any generating function $G$ which satisifies Condition \eqref{eq:diff_cons.condition.gev} also satisfies Condition \eqref{eq:sc.condition.gev} with $M = \frac{\tilde{C}}{\mu}$.
\end{prop}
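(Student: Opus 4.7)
The plan is to derive Condition \eqref{eq:sc.condition.gev} from Condition \eqref{eq:diff_cons.condition.gev} by two elementary operations followed by Euler's identity for homogeneous functions. First, since every coordinate $x^{(i)}$ lies in $\mathbb{R}_+$, I may multiply both sides of \eqref{eq:diff_cons.condition.gev} by $x^{(i)} \geq 0$ without flipping the inequality, obtaining
\[
\frac{\partial^2 G(x)}{\partial (x^{(i)})^2}\cdot (x^{(i)})^2 \;\le\; \tilde{C}\cdot \frac{\partial G(x)}{\partial x^{(i)}}\cdot x^{(i)}
\]
for each $i=1,\dots,n$. Summing these $n$ pointwise inequalities yields
\[
\sum_{i=1}^n \frac{\partial^2 G(x)}{\partial (x^{(i)})^2}\cdot (x^{(i)})^2 \;\le\; \tilde{C}\sum_{i=1}^n \frac{\partial G(x)}{\partial x^{(i)}}\cdot x^{(i)}.
\]

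The second step is to evaluate the right-hand side via Euler's theorem. Inspection of the canonical generators—the MNL generator $G(x)=\sum_i (x^{(i)})^{1/\mu}$ from Example \ref{ex:mnl} and the GNL generator in \eqref{eq:gnl_generatingfct}—shows that under the paper's conventions $G$ is positively homogeneous of degree $1/\mu$; indeed a direct computation gives $G(\lambda x)=\lambda^{1/\mu}G(x)$. Applying Euler's identity to a function that is positively homogeneous of degree $1/\mu$ produces
\[
\sum_{i=1}^n x^{(i)}\,\frac{\partial G(x)}{\partial x^{(i)}} \;=\; \frac{1}{\mu}\, G(x),
\]
which immediately plugs back in to give $\sum_i \partial^2_{ii} G(x)\cdot(x^{(i)})^2 \le \frac{\tilde{C}}{\mu}\,G(x)$, i.e.\ precisely Condition \eqref{eq:sc.condition.gev} with $M=\tilde{C}/\mu$.

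The only potential snag is bookkeeping around the homogeneity degree: Definition \ref{Generator_GEV}(ii) as stated declares degree $1$, whereas the working generating functions used throughout the paper are of degree $1/\mu$. I would therefore either (a) invoke the degree-$1/\mu$ Euler identity directly, as the examples justify, or (b) observe that with the literal degree-$1$ convention the factor $1/\mu$ is absorbed into the definition of the surplus via $E(u)=\mu\ln G(e^u)$, and the same final bound is reached after reparameterization. Either route is a few lines; no further technical ingredients are needed and no delicate estimate is required, since the inequality propagates termwise and Euler's identity is an equality.
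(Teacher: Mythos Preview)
Your proposal is correct and follows essentially the same argument as the paper: multiply \eqref{eq:diff_cons.condition.gev} by $x^{(i)}\ge 0$, sum over $i$, and apply Euler's identity for a degree-$1/\mu$ homogeneous function to obtain $M=\tilde{C}/\mu$. Your remark about the bookkeeping on the homogeneity degree is apt—the paper itself invokes $1/\mu$-homogeneity at this step despite the nominal degree-$1$ statement in Definition~\ref{Generator_GEV}(ii).
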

\begin{proof}
	Let us fix any $x=\left(x^{(1)}, \ldots, x^{(n)}\right)^T\in \R^n_+$ and multiply \eqref{eq:diff_cons.condition.gev} by $ x^{(i)} \in \R^n_+$ which yields for all $i=1, \ldots, n$
	\[
	\frac{\partial^2 G(x)}{\partial x^{(i)2}} \cdot x^{(i)2} \leq \tilde{C} \cdot \frac{\partial G(x)}{\partial x^{(i)}}\cdot x^{(i)}.
	\]
	Therefore, summing up over all $i=1, \ldots, n$ does not change the inequality, i.\,e.
	\begin{equation}\label{eq:euler}
	\sum_{i=1}^{n} \frac{\partial^2 G(x)}{\partial x^{(i)2}} \cdot x^{(i)2} \leq \tilde{C} \cdot \sum_{i=1}^{n} \frac{\partial G(x)}{\partial x^{(i)}}\cdot x^{(i)}.
	\end{equation}
Due to Property $G(1)$, any generating function is $\frac{1}{\mu}$- homogeneous. Applying  Euler's theorem on homogeneous functions to the right side of \eqref{eq:euler}, see for example in \citet{pemberton2011mathematics} provides
\[
 \tilde{C} \cdot \sum_{i=1}^{n} \frac{\partial G(x)}{\partial x^{(i)}} \cdot x^{(i)} =  \tilde{C}\cdot \frac{1}{\mu} G(x).
\]
Altogether, we conclude that 
\[
		\sum_{i=1}^{n} \frac{\partial^2 G(x)}{\partial x^{(i)2}} \cdot x^{(i)2} \leq \frac{\tilde{C}}{\mu} G(x).
\]
Note that $x \in \Rn_+$ has been chose arbitrarily.
\end{proof}
The previous results can be combined with those of \citet{melo2021learning} and \citet{abernethy2016perturbation} to establish a broader connection between full-feedback and bandit settings. In particular, Proposition~\ref{prop:1} shows that the class of GEV models suitable for full-feedback online optimization is at least as large as the class of GEV models applicable to bandit algorithms. This observation underscores that differential consistency is a more restrictive requirement than the conditions imposed in the full-feedback setting.\\ \vspace*{0.5 cm}

\subsection{MAB and the GNL model} The GNL model introduced by \citet{wen:2001},  provides a flexible framework for capturing correlations among choices, extending beyond the independence assumption of the standard MNL model. This structure makes  the GNL particularly appealing for MAB problems, where arms may exhibit correlated rewards. By modeling the choice probabilities through the surplus function derived from the GNL model, one can design bandit algorithms that account for nested or hierarchical relationships among actions. In this setting, the GNL-based surplus function can be incorporated into the GBPA framework, enabling efficient sampling and estimation while exploiting structural dependencies among arms.

 The generating function presented in Equation~\eqref{eq:gnl_generatingfct} characterizes the GNL model. For ease of exposition, we reproduce the generating function associated with the GNL model below:
\[
G(x)= \sum_{\ell \in L} \left( \sum_{i =1}^{n} \left(\sigma_{i\ell}\cdot x^{(i)}\right)^{\nicefrac{1}{\mu_\ell}} \right)^{\nicefrac{\mu_\ell}{\mu}}.
\]

Let us analyze the $C$-differential-consistency of GNL models.
\begin{thm}\label{cor:mab_gnl}
	For GNL the corresponding surplus function is $\frac{1}{\displaystyle\min_{\ell \in L} \mu_\ell}$- differential-consistent.
\end{thm}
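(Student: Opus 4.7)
The plan is to reduce everything to Theorem~\ref{thm:mab_gev} by verifying the pointwise inequality \eqref{eq:diff_cons.condition.gev} for the GNL generating function with $\tilde{C} = \frac{1}{\min_{\ell\in L}\mu_\ell}-1$, which indeed lies in $(-1,\infty)$ because every $\mu_\ell>0$. This gives $C = \tilde{C}+1 = \frac{1}{\min_{\ell\in L}\mu_\ell}$, matching the claim. So the whole task collapses to a clean bound on $\partial^2 G/\partial (x^{(i)})^2 \cdot x^{(i)}$ in terms of $\partial G/\partial x^{(i)}$.

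First, I would introduce the abbreviation $H_\ell(x)=\sum_{i=1}^n(\sigma_{i\ell}x^{(i)})^{1/\mu_\ell}$, so that $G=\sum_\ell H_\ell^{\mu_\ell/\mu}$. A direct chain-rule computation gives
\[
\frac{\partial G}{\partial x^{(i)}} \;=\; \frac{1}{\mu}\sum_{\ell\in L} H_\ell^{\mu_\ell/\mu-1}\,\sigma_{i\ell}^{1/\mu_\ell}\,(x^{(i)})^{1/\mu_\ell-1}.
\]
Differentiating once more in $x^{(i)}$ produces two contributions: one from differentiating $H_\ell^{\mu_\ell/\mu-1}$, which carries the factor $(\mu_\ell/\mu-1)$, and one from differentiating $(x^{(i)})^{1/\mu_\ell-1}$, which carries the factor $(1/\mu_\ell-1)$. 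After multiplying through by $x^{(i)}$ I would arrive at
\[
\frac{\partial^2 G}{\partial (x^{(i)})^2}\,x^{(i)} \;=\; \frac{1}{\mu}\sum_{\ell\in L}\!\left(\tfrac{\mu_\ell}{\mu}-1\right)H_\ell^{\mu_\ell/\mu-2}\tfrac{\sigma_{i\ell}^{2/\mu_\ell}}{\mu_\ell}(x^{(i)})^{2/\mu_\ell-1} \;+\; \sum_{\ell\in L}\!\left(\tfrac{1}{\mu_\ell}-1\right)\cdot\tfrac{1}{\mu}H_\ell^{\mu_\ell/\mu-1}\sigma_{i\ell}^{1/\mu_\ell}(x^{(i)})^{1/\mu_\ell-1}.
\]

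The key structural observation is that the first sum is nonpositive: $\mu_\ell/\mu-1\le 0$ because $\mu_\ell\le\mu$ in GNL, and every remaining factor is nonnegative on $\R^n_+$. Dropping it yields an upper bound consisting only of the second sum. In the second sum, the per-$\ell$ quantity $\frac{1}{\mu}H_\ell^{\mu_\ell/\mu-1}\sigma_{i\ell}^{1/\mu_\ell}(x^{(i)})^{1/\mu_\ell-1}$ is precisely the $\ell$-th summand of $\partial G/\partial x^{(i)}$, and its coefficient $(1/\mu_\ell-1)$ is bounded above by $(1/\min_{\ell}\mu_\ell-1)$. Pulling this constant outside the sum gives
\[
\frac{\partial^2 G}{\partial (x^{(i)})^2}\,x^{(i)} \;\le\; \left(\frac{1}{\min_{\ell\in L}\mu_\ell}-1\right)\frac{\partial G}{\partial x^{(i)}},
\]
which is \eqref{eq:diff_cons.condition.gev} with $\tilde{C}=\frac{1}{\min_{\ell\in L}\mu_\ell}-1$. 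Theorem~\ref{thm:mab_gev} then delivers $C$-differential consistency with $C=\tilde{C}+1=\frac{1}{\min_{\ell\in L}\mu_\ell}$.

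The main obstacle is purely bookkeeping: correctly tracking the two composed exponents when differentiating $H_\ell^{\mu_\ell/\mu}$ and making sure the cross-nest term (the one involving $\mu_\ell/\mu-1$) really carries the right sign on all of $\R^n_+$. Once that sign is pinned down, the remaining step is a single uniform bound $1/\mu_\ell\le 1/\min_\ell\mu_\ell$, so no further inequalities (Hölder, AM-GM, Euler identities) are needed. A minor sanity check worth recording is that for the MNL degeneracy (a single nest with $\mu_\ell=\mu$) the dropped term vanishes identically and the constant specialises to $1/\mu$, matching the known bound for Exp3 up to the stepsize rescaling.
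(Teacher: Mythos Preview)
Your proposal is correct and follows essentially the same route as the paper's proof: compute the first two partial derivatives of the GNL generating function, drop the nonpositive cross-nest term using $\mu_\ell\le\mu$, bound the remaining per-nest coefficients uniformly by $\frac{1}{\min_\ell\mu_\ell}-1$, and invoke Theorem~\ref{thm:mab_gev}. Your use of the shorthand $H_\ell$ streamlines the bookkeeping, but the argument is otherwise identical in structure and detail.
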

\begin{proof}
	We review the following formulas, which were derived in the proof of Corollary 4 by \citet{prox}:
	 \[
	 \frac{\partial G\left(x\right)}{\partial x^{(i)}}= \frac{1}{\mu}\sum_{\ell \in L} \left( \sum_{i =1}^{n} \left(\sigma_{i\ell}\cdot x^{(i)}\right)^{\nicefrac{1}{\mu_\ell}} \right)^{\nicefrac{\mu_\ell}{\mu}-1}
	 \left(\sigma_{i\ell}\cdot x^{(i)}\right)^{\nicefrac{1}{\mu_\ell}-1} \cdot \sigma_{i\ell},
	 \]
	 and
	 \[
	 \begin{array}{rcl}
	 \displaystyle \frac{\partial^2 G(x)}{\partial x^{(i)2}} &=& \displaystyle \frac{1}{\mu}\sum_{\ell \in L} 
	 \frac{1}{\mu_\ell}\left(\frac{\mu_\ell}{\mu}-1\right)
	 \left( \sum_{i =1}^{n} \left(\sigma_{i\ell}\cdot x^{(i)}\right)^{\nicefrac{1}{\mu_\ell}} \right)^{\nicefrac{\mu_\ell}{\mu}-2}
	 \left(\left(\sigma_{i\ell}\cdot x^{(i)}\right)^{\nicefrac{1}{\mu_\ell}-1} \cdot\sigma_{i\ell}\right)^2 \\ \\
	 &+& \displaystyle \frac{1}{\mu}\sum_{\ell \in L} \left(\frac{1}{\mu_\ell}-1 \right) \left( \sum_{i =1}^{n} \left(\sigma_{i\ell}\cdot x^{(i)}\right)^{\nicefrac{1}{\mu_\ell}} \right)^{\nicefrac{\mu_\ell}{\mu}-1}
	 \left(\sigma_{i\ell}\cdot x^{(i)}\right)^{\nicefrac{1}{\mu_\ell}-2}  \cdot\sigma_{i\ell}^2.
	 \end{array}
	 \]
	 Due to $\mu_\ell \leq \mu$, $\ell \in L$, it holds:
	 \[
	 \frac{\partial^2 G(x)}{\partial x^{(i)2}} \leq \frac{1}{\mu}\sum_{\ell \in L} \left(\frac{1}{\mu_\ell}-1 \right) \left( \sum_{i =1}^{n} \left(\sigma_{i\ell}\cdot x^{(i)}\right)^{\nicefrac{1}{\mu_\ell}} \right)^{\nicefrac{\mu_\ell}{\mu}-1}
	 \left(\sigma_{i\ell}\cdot x^{(i)}\right)^{\nicefrac{1}{\mu_\ell}-2} \cdot\sigma_{i\ell}^2.
	 \]
	 We multiply by $x^{(i)}$ and get 
	 \[
	 \frac{\partial^2 G(x)}{\partial x^{(i)2}}\cdot x^{(i)} \leq \frac{1}{\mu}\sum_{\ell \in L} \left(\frac{1}{\mu_\ell}-1 \right) \left( \sum_{i =1}^{n} \left(\sigma_{i\ell}\cdot x^{(i)}\right)^{\nicefrac{1}{\mu_\ell}} \right)^{\nicefrac{\mu_\ell}{\mu}-1}
	 \left(\sigma_{i\ell}\cdot x^{(i)}\right)^{\nicefrac{1}{\mu_\ell}-1} \cdot\sigma_{i\ell}.
	 \]
	 We follow similar considerations as \citet{prox} and conclude
	  \begin{align*}
	  \frac{\partial^2 G(x)}{\partial x^{(i)2}}\cdot x^{(i)} &\leq \frac{1}{\mu}\sum_{\ell \in L} \left(\frac{1}{\mu_\ell}-1 \right) \left( \sum_{i =1}^{n} \left(\sigma_{i\ell}\cdot x^{(i)}\right)^{\nicefrac{1}{\mu_\ell}} \right)^{\nicefrac{\mu_\ell}{\mu}-1}
	  \left(\sigma_{i\ell}\cdot x^{(i)}\right)^{\nicefrac{1}{\mu_\ell}-1} \cdot\sigma_{i\ell} \\ &\leq\displaystyle\max_{\ell \in L} \left(\frac{1}{\mu_\ell}-1\right) \cdot \frac{1}{\mu}\sum_{\ell \in L}\left( \sum_{i =1}^{n} \left(\sigma_{i\ell}\cdot x^{(i)}\right)^{\nicefrac{1}{\mu_\ell}} \right)^{\nicefrac{\mu_\ell}{\mu}-1}
	  \left(\sigma_{i\ell}\cdot x^{(i)}\right)^{\nicefrac{1}{\mu_\ell}-1} \cdot\sigma_{i\ell} \\ &=  \left(\frac{1}{\displaystyle\min_{\ell \in L}\mu_\ell}-1\right) \cdot \frac{\partial G\left(x\right)}{\partial x^{(i)}}.
	  \end{align*}
	 Consequently, we can set $\tilde{C} = \left(\frac{1}{\displaystyle\min_{\ell \in L}\mu_\ell}-1\right)$. It remains to apply Theorem \ref{thm:mab_gev} which concludes the assertion by yielding $C = \frac{1}{\displaystyle\min_{\ell \in L}\mu_\ell}$. 
\end{proof}
Theorem \ref{cor:mab_gnl} enables to apply  the family of GNL models in the adverserial bandit setting.   Note that this family not only contains the multinomial logit with independent arms but also several models which are able to incorporate correlation structure such as nested logit, paired combinatorial logit \ldots . \\ In \citet{prox} the constant $M$ from Condition \eqref{eq:sc.condition.gev} is derived for GNL models, i.\,e. $M = \frac{1}{\mu}\left(\frac{1}{\displaystyle \min_{\ell \in L}\mu_\ell} -1 \right) $.  Considering Proposition \ref{prop:1} we hence see that $M=\frac{\tilde{C}}{\mu}$. Furthermore, the constant $C$ which enters the (expected) regret bound, only depends on the smallest nest parameter. Let us illustrate the constant $C$ based on the examples from Section \ref{ss:ARUM}.
\begin{rem}[Differential-Consistency of MNL and NL]
Recall the MNL model and its generating function from Example \ref{ex:mnl}.
	Note that in this example we have $\tilde{C} = \left(\frac{1}{\mu}-1\right)$ and therefore $C = \frac{1}{\mu}$. \\[0.5 em]
For the NL model from  Example \ref{ex:nl} we have
 $\tilde{C} = \left(\frac{1}{\displaystyle \min_{\ell \in L}\mu_\ell} -1 \right)$ and therefore $C = \frac{1}{\displaystyle \min_{\ell \in L}\mu_\ell}$. 
\end{rem}

We can finally state the main result of this Section.
\begin{thm}\label{thm:main}
    The Algorithm \ref{algo:mab} with a surplus function following a Generalized Nested Logit model  is at most 
    \[
\eta \cdot E(\mathbf{0}) + \frac{n \cdot T}{\displaystyle \min_{\ell \in L \cdot \eta} \mu_\ell}.
 \]
\end{thm}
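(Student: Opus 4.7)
The plan is to chain together three ingredients already available in the excerpt: the general bound of Lemma~\ref{lem:mab_regret}, the bound on the conditional Bregman divergence for differentially consistent potentials in Equation~\eqref{eq:bounded_breg}, and the differential consistency constant for GNL established in Theorem~\ref{cor:mab_gnl}. The only new work is a scaling step that tracks how differential consistency transforms under the convex perspective $\tilde{E}(U;\eta)=\eta\, E(U/\eta)$.

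First I would apply Lemma~\ref{lem:mab_regret} to obtain
\[
\mathbb{E}[R_T] \leq \mathbb{E}_{i_1,\dots,i_T}\Bigl[\sum_{t=1}^{T}\mathbb{E}_{i_t}\bigl[D_{\tilde{E}}(\hat{U}_t,\hat{U}_{t-1})\,\big|\,\hat{U}_{t-1}\bigr]\Bigr] + \tilde{E}(0;\eta).
\]
The additive term is handled immediately from the definition of the perspective: $\tilde{E}(0;\eta)=\eta\, E(\mathbf{0})$, which produces the first summand of the claimed bound.

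Next I would transfer differential consistency from $E$ to $\tilde{E}(\cdot;\eta)$. A direct computation gives $\nabla_i \tilde{E}(U;\eta)=\nabla_i E(U/\eta)$ and $\nabla^2_{ii}\tilde{E}(U;\eta)=\tfrac{1}{\eta}\nabla^2_{ii}E(U/\eta)$, so if $E$ is $C$-differentially consistent in the sense of Definition~\ref{def:diff_cons}, then
\[
\nabla^2_{ii}\tilde{E}(U;\eta) \;\leq\; \tfrac{C}{\eta}\,\nabla_i\tilde{E}(U;\eta),
\]
i.e.\ $\tilde{E}(\cdot;\eta)$ is $\tfrac{C}{\eta}$-differentially consistent. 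Theorem~\ref{cor:mab_gnl} supplies $C=1/\min_{\ell\in L}\mu_\ell$ for the GNL surplus function, hence $\tilde{E}$ enjoys differential-consistency constant $1/(\eta\min_{\ell\in L}\mu_\ell)$.

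With this in hand I would invoke Equation~\eqref{eq:bounded_breg} (the standard GBPA bound for differentially consistent potentials under loss-only feedback, which applies since $u_t\in[-1,0]^n$ and the GNL sampling probabilities lie in $\operatorname{rint}(\Delta_n)$) to obtain, for each round $t$,
\[
\mathbb{E}_{i_t}\bigl[D_{\tilde{E}}(\hat{U}_t,\hat{U}_{t-1})\,\big|\,\hat{U}_{t-1}\bigr] \;\leq\; \frac{n}{\eta\min_{\ell\in L}\mu_\ell}.
\]
Summing this uniform bound over $t=1,\dots,T$, taking outer expectations, and adding the $\tilde{E}(0;\eta)=\eta E(\mathbf{0})$ term yields the claimed regret bound.

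The only subtle step is the scaling of the differential-consistency constant under the perspective transformation; everything else is bookkeeping on top of previously stated results. A minor point to be careful about is that Equation~\eqref{eq:bounded_breg} as stated carries a factor of $1/2$, so the constant in the second term of the theorem absorbs that factor into the notation — I would either note this explicitly or, equivalently, state the bound with the tighter $\tfrac{1}{2}$ factor and remark that the theorem retains the simpler form.
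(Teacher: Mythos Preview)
Your proposal is correct and follows essentially the same route as the paper: apply Lemma~\ref{lem:mab_regret}, evaluate $\tilde{E}(0;\eta)=\eta E(\mathbf{0})$, transfer the GNL differential-consistency constant from Theorem~\ref{cor:mab_gnl} to the perspective $\tilde{E}$ via the $1/\eta$ scaling, and then invoke~\eqref{eq:bounded_breg}. Your explicit derivative computation for the scaling step and your remark about the suppressed factor $\tfrac{1}{2}$ from~\eqref{eq:bounded_breg} are in fact more detailed than the paper's own argument, which merely asserts the perspective's constant without calculation.
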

\begin{proof}
    We apply Lemma \ref{lem:mab_regret} and conclude that $\tilde{E}(\mathbf{0};\eta) = \eta \cdot E(\mathbf{0})$. Furthermore, due to Theorem \ref{cor:mab_gnl} the surplus function is $\frac{1} \mu$- differentiable consistent and thus, its convex perspective is $\frac{1}{\eta\cdot \displaystyle \min_{\ell \in L}\mu_\ell}$- differentiable consistent. Together with Inequality \eqref{eq:bounded_breg} this provides an upper bound of  $\frac{n \cdot T}{\displaystyle \min_{\ell \in L \cdot \eta} \mu_\ell}$  for the divergence part, which concludes the assertion.
\end{proof}
 GNL models can not only  be used to design algorithms for online linear optimization algorithms but also for adversarial multiarmed bandit problems. This result enable the learner to design a large amount of computationally efficient algorithms with vanishing average regret and with sampling probabilities adjusted to the dependence structure of the arms.  

 \section{Generalized Gradient Bandit Algorithm}

In the preceding section, we established theoretical guarantees for a class of algorithms derived from GBPA, with regret bounds proved in adversarial settings where reward distributions can shift unpredictably.

\textcolor{black}{We now turn to the stochastic setting, showing how the GEV framework remains valuable in analyzing MAB problems under fixed reward distributions.} Our focus is the Gradient Bandit Algorithm introduced in \cite[Chapter~2.8]{sutton}, a well-regarded method in reinforcement learning that shares similarities with the EXP-3 Algorithm. Unlike EXP-3, however, its updates are driven by a preference function rather than direct rewards, and adjustments are made not only for the chosen arm but also for the unchosen ones.

We show that by allowing for a general probability distribution over arms, beyond the traditional softmax function, we enable the potential for correlated learning patterns between actions. Specifically, the distribution could follow any instance from the GEV family that ensures a closed-formed solution for the probabilities.  This broader class of probability models allows for more nuanced updates that can capture dependencies between actions, potentially leading to more efficient learning dynamics. Such an approach could enhance the algorithm's ability to adapt to complex environments by leveraging the interplay between actions, thereby improving overall performance.\smallskip

\subsection{The generalized gradient bandit algorithm} Following \cite[Chapter~2.8]{sutton}, the Gradient Bandit Algorithm at the $t$-th iteration for the $n$-Arm Bandit Problem reads as:

\begin{itemize}
	\item Sample $i_t$ according to  $\mathbb{P}(i_t = i) = \frac{e^{u_t(i)}}{\sum_{j=1}^{n} e^{u_t(j)}}$;
	\item Learner observes the rewards $R_t \in \mathbb{R}$;
	\item Update the preference \[ u_{t+1}(i)  \leftarrow u_t(i) + \alpha \cdot (R_t - \bar{R}_t) \cdot (1 - \mathbb{P}(i_t = i)) \quad \text{for } i = i_t \] \[ u_{t+1}(j) \leftarrow u_{t}(j) -  \alpha \cdot(R_t - \bar{R}_t) \cdot \mathbb{P}(i_t = j)\quad \text{for all } j \neq i_t, \]

\end{itemize}
where $\alpha$ is the stepsize parameter and $\bar{R}_t$ is the baseline helping to reduce variance \citep{sutton}. 

The reward $R_t$ is a real-valued scalar. It is determined by the realized random reward of the sampled arm in the $t$-th iteration. For clarity, we adapt the notation from the previous section and write for any iteration $t$: $$x_t^{(i)} := \mathbb{P}(i_t = i), \quad i =1, \ldots, n. $$
With this notation the preference update step can be written as 
\begin{align}\label{eq:pref_update}
\begin{split}
   u_{t+1}(i)  \leftarrow u_t(i) + \alpha \cdot (R_t - \bar{R}_t) \cdot (1 - x_t^{(i)}) \quad \text{for } i = i_t,  \\ u_{t+1}(j) \leftarrow u_{t}(j) -  \alpha \cdot(R_t - \bar{R}_t) \cdot x_t^{(j)}\quad \text{for all } j \neq i_t.
   \end{split}
\end{align}

As shown by \cite{sutton}, the Gradient Bandit Algorithm effectively performs stochastic gradient ascent on the expected reward. A key aspect is that the partial derivatives of the expected reward with respect to the preference values at time step $t$ can be expressed as the expected value of the random variable $i_t$. Precisely, it holds \citep{sutton},
\begin{equation}\label{eq:gradient_ascent}
    \frac{\partial \mathbb{E}(R_t)}{\partial u_t(i)} = \mathbb{E}_{i_t} \left[(R_t - \bar{R}_t) \cdot \frac{\partial x_t^{(i_t)}}{\partial u_t(i)} / x_t^{(i_t)} \right], \quad i =1, \ldots, n.
\end{equation}
The derivation of Equation \eqref{eq:gradient_ascent} does not depend on the concrete choice of the sampling probabilities. A requirement to derive Equation \eqref{eq:gradient_ascent} is that the sum of the partial derivatives over all arms is 0, which holds for any ARUM satisfying Assumption \ref{ass:joint density}, see for example \cite{hofbauer} or \cite{prox}. Due to Equation \eqref{eq:gradient_ascent}, the partial derivatives of the softmax probabilities determine the preference update step of the Gradient Bandit Algorithm. In facte these derivatives are given by 
\begin{align}
  \begin{split}
    x_t^{(i)} \cdot (1 - x_t^{(i)}) \quad \text{for } i = i_t, \\ 
     - x_t^{(i)} \cdot x_t^{(j)}\quad \text{for all } j \neq i_t.
  \end{split}  
\end{align}
Clearly, these derivatives treat every non-sampled arm uniformly, a direct consequence of the IIA  property inherent in the multinomial logit model. As a result, there is a uniform information update for each non-sampled arm. However, in certain scenarios, the learner may wish to exploit additional structure within the problem, such as reducing the probability of selecting certain arms by a greater magnitude compared to others. 
Generalizing to GNL opens up exciting possibilities for extending  the Gradient Bandit Algorithm, paving the way for more sophisticated and dynamic preference update strategies.  \\ 
These considerations directly lead to our Generalized Bandit Algorithm:
\begin{tcolorbox}
\begin{algo}[Generalized Gradient Bandit Algorithm for n-armed bandit] {\ \\}\label{algo:gen_bandit}
 {\bfseries Input}: GNL choice model with set of parameters $\Theta$, Stepsize $\alpha >0$\\
 {\bfseries Initalize}: $u^{(i)}_0 = 0$  for $i= 1, \ldots, n$ and $R_0 = 0, \hat{R}_0 = 0$\\
{\bfseries For $t=1, \ldots, T$ do}:
 \begin{itemize}
     \item Sample an arm $i_{t}$ according to the probabilities $x_t^{(i)}$ for $i =1, \ldots, n$, defined by the GNL choice model
     \item Observe  reward $R_{t}$ 
     \item Update preferences : \[
      u^{(i)}_{t+1} = u^{(i)}_{t} + \alpha \cdot \left[(R_t - \bar{R}_t) \cdot \frac{\partial x_t^{(i_t)}}{\partial u_t(i)} / x_t^{(i_t)} \right], \quad i= 1, \ldots\ n,
     \]
     \item Update $\bar{R}_{t+1}  = \frac{1}{t} \cdot (R_t - \bar{R}_t) $.
 \end{itemize}

\end{algo}
\end{tcolorbox}
Several observations are noteworthy. Firstly, the sampling procedure can be computed efficiently, as the choice probabilities of any GNL model are expressed in a closed form through a multiplicative two-stage process (see Section \ref{ss:ARUM}). Furthermore, the choice probabilities, as gradients of a GNL surplus function, are Lipschitz continuous \citep{prox}. Additionally, the preference update becomes more powerful by incorporating prior knowledge through parameters such as the number and partition of nests, nest parameters, and more. Lastly, the Gradient Bandit Algorithm \citep{sutton} is an instance of Algorithm \ref{algo:gen_bandit} by providing the MNL model as input. \\ \smallskip

 These considerations directly lead to our introduction of the Nested Logit Gradient Bandit Algorithm, which is an instance of the Generalized Bandit Algorithm. It is specifically designed to  provide a more refined approach to decision-making, allowing for differential treatment of non-sampled arms based on the nested structure.

\begin{tcolorbox}
\begin{algo}[Nested Logit Gradient Bandit Algorithm for n-armed bandit] {\ \\}\label{algo:nl_bandit}
 {\bfseries Input}: Partition of exclusive nests L with nest parameters $\mu_\ell$, for $\ell \in L$, Stepsize $\alpha >0$\\
 {\bfseries Initalize}: $u^{(i)}_0 = 0$  for $i= 1, \ldots, n$ and $R_0 = 0, \hat{R}_0 = 0$\\
{\bfseries For $t=1, \ldots, T$ do}:
 \begin{itemize}
     \item Sample an arm $i_{t}$ according to the probabilities
     \[ 
     x_t^{(i)} = 
		\underbrace{\frac{e^{\mu_\ell \ln \sum_{i \in N_\ell} e^{\nicefrac{u^{(i)}}{\mu_\ell}}}}{\displaystyle
			\sum_{\ell \in L} e^{\mu_\ell \ln \sum_{i \in N_\ell} e^{\nicefrac{u^{(i)}}{\mu_\ell}}}}}_{=: \hat{x}^{(\ell)}_t}\cdot   \underbrace{\frac{e^{\nicefrac{u^{(i)}}{\mu_\ell}}}{\displaystyle
			\sum_{i\in N_\ell} e^{\nicefrac{u^{(i)}}{\mu_\ell}}}}_{=: x^{(i|\ell)}_t}
		\]
     \item Observe  reward $R_{t}$ 
     \item Update preferences : 
     \begin{align*}
     &\text{for} \quad i = i_t \quad \text{and} \quad i \in N_{\ell}: \\
    &u^{(i)}_{t+1} = u^{(i)}_{t} + \alpha \cdot \left[(R_t - \bar{R}_t) \cdot \frac{1}{\mu_\ell} \cdot \left[ 1 - (1-\mu_\ell)\cdot x^{(i|\ell)}_t - \mu_\ell\cdot x^{(i)}_t \right] \right] \\[1.5 em]
     &\text{for} \quad k \neq i = i_t \quad \text{and} \quad k, i \in N_{\ell} : \\
    &u^{(k)}_{t+1} = u^{(k)}_{t} - \alpha \cdot \left[(R_t - \bar{R}_t) \cdot \frac{x^{(k)}_t}{x^{(i)}_t} \cdot \left[ x^{(i)}_t +  \frac{1-\mu_\ell}{\mu_\ell}\cdot x^{(i|\ell)}_t  \right] \right] \\[1.5 em]
    &\text{for} \quad j \neq i = i_t \quad \text{and} \quad j \in N_j \neq N_\ell: \\
     &u^{(j)}_{t+1} = u^{(j)}_{t} - \alpha \cdot(R_t - \bar{R}_t) \cdot x^{(j)}_t
     \end{align*}
     \item Update $\bar{R}_{t+1}  = \frac{1}{t} \cdot (R_t - \bar{R}_t) $.
 \end{itemize}

\end{algo}
\end{tcolorbox}
The preference update step of Algorithm \ref{algo:nl_bandit} comprises three different formulas depending on the nest structure. Notably, the update for arms in different nests follows the same formula as the Gradient Bandit Algorithm. Arms within the same nest as the sampled arm are more significantly influenced by the observed reward. Let us first analyze the preference update of the played arm $i$:
\begin{align*}
 & \frac{1}{\mu_\ell} \cdot \left[ 1 - (1-\mu_\ell)\cdot \underbrace{x^{(i|\ell)}_t}_{\leq 1} - \mu_\ell\cdot x^{(i)}_t \right]  \\ 
 \geq 
  & \frac{1}{\mu_\ell} \cdot \left[ 1 - 1+\mu_\ell - \mu_\ell\cdot x^{(i)}_t \right] \\ 
  = & 1- x^{(i)}_t
 \end{align*}
 For any non played arm $k$  in the same nest as $i$ it holds:
 \begin{align*}
      & \frac{-x^{(k)}_t}{\cdot  x^{(i)}_t} \cdot \left[ x^{(i)}_t +  \frac{1-\mu_\ell}{\mu_\ell}\cdot \underbrace{x^{(i|\ell)}_t}_{ \geq x^{(i)}_t}  \right] \\ \leq
      & -x^{(k)}_t - x^{(k)}_t  \cdot \frac{1-\mu_\ell}{\mu_\ell} \\ = 
      & -x^{(k)}_t \cdot \underbrace{\frac{1}{\mu_\ell}}_{\geq 1} \\ \leq 
      & -x^{(k)}_t 
 \end{align*}

Additionally, when the nest parameter equals one, indicating that the arms within the nest are completely uncorrelated, the update for these arms also mirrors the pattern of the Gradient Bandit Algorithm. When each arm is placed in its own nest with a parameter of $1$, the Nested Logit Gradient Bandit Algorithm becomes equivalent to the Gradient Bandit Algorithm. 
%Baseline:

%The average reward $\bar{R}$ serves as a baseline to reduce variance in updates. It helps in stabilizing the learning process.
%Traditionally, the Gradient Bandit Algorithm employs the softmax function to convert preference values into action probabilities. However, the softmax function is merely one instance within a broader class of probability distributions. By exploring this broader class, we aim to generalize the Gradient Bandit algorithm, potentially enhancing its flexibility and performance.

%This exploration is particularly compelling as it allows us to investigate the impact of alternative probability models on the algorithm's performance, offering insights into its adaptability and efficiency across diverse scenarios. The generalization of the Gradient Bandit algorithm not only enhances its theoretical appeal but also provides practical benefits, enabling customization for specific problem domains where traditional assumptions may not apply.
 \section{ Numerical Experiments: Nested Logit Bandit Algorithm}
To evaluate the practical implications of our generalization, we perform extensive numerical simulations. These simulations assess the performance of the Nested Logit Gradient Bandit Algorithm (NL Bandit Algorithm) across a variety of environments, comparing its effectiveness to the traditional approach. By systematically analyzing the outcomes, we aim to demonstrate the potential advantages of this generalized framework, providing a comprehensive understanding of its applicability and benefits.  \\ \smallskip

We recall that the sampling probabilities in Algorithm \ref{algo:nl_bandit} are expressed as the product of two probabilities, both containing exponential terms that may lead to overflow issues. To address this, we can leverage the numerically stable forms of Softmax and LogSumExp to mitigate potential overflow problems.\\ 
For clarity, our experiments are based on the framework outlined in \cite[Chapter~2.3]{sutton} and \cite[Chapter~2.8]{sutton}, hence each experiment consists of 2000 randomly generated $n$-armed bandit problems and their performance over $1000$ iterations. \\ 
\smallskip

The first environment  consists of 10 arms, i.\,e. $n=10$. Each arms' mean reward is sampled from a normal distribution with mean 4 and variance 1. which we refer to as MNL environment. We compare the performance of the classical Gradient Bandit Algorithm with that of the Nested Logit Gradient Bandit. As demonstrated in the preceding section, the classical Gradient Bandit Algorithm can be considered a special case of our Generalized Gradient Bandit Algorithm. Therefore, we refer to it as the MNL Bandit Algorithm. Our comparison for this environment includes the MNL Algorithm as well as the MNL specification of the Nested Logit Gradient Bandit Algorithm. More precisely, there is only one nest with nest parameter $1$. Additionally, we include a NL Bandit Algorithm with $5$ nests, where nest $i$ contains arms $2*i, 2*i -1$, for $i = 1, \ldots, 5$. The nest parameters are set to $0.8$. This specification is called NL $1$. The variant $NL 2$ consists of the $4$ nests ($(1,2), (3,4,5), (6,7), (8,9,10)$) with corresponding nest parameters $(0.3,0.45,0.3,0.45)$. Note that this is a rather random environment where no prior knowledge concerning the structure could be exploited.\\ 
Figure \ref{fig:env_mnl} shows the results of this first simulation. 

\begin{figure}[H]
    \centering
    \begin{subfigure}{\linewidth}
        \centering
        \includegraphics[height=0.3\textheight]{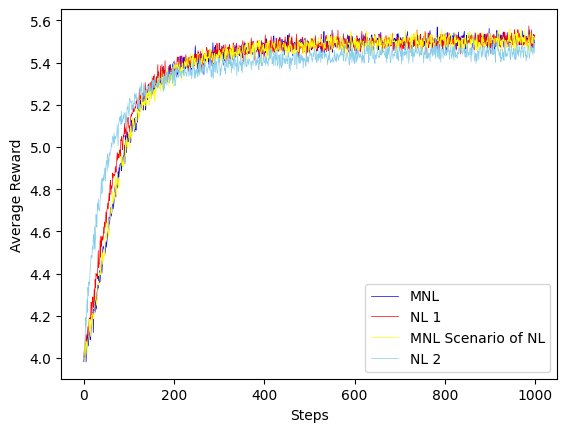} % Replace with your plot file
        %\caption{Plot 1}
        \label{fig:env_mnl}
    \end{subfigure}
    
    \vspace{0.5cm} % Adjust the space between plots if needed
    
    \begin{subfigure}{\linewidth}
        \centering
        \includegraphics[height=0.3\textheight]{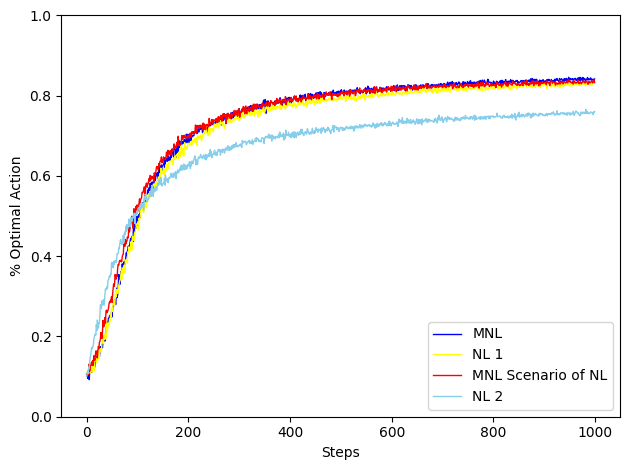} % Replace with your plot file
        %\caption{Plot 2}
        \label{fig:env_mnl}
    \end{subfigure}
    
    \caption{MNL environment}
    \label{fig:env_mnl}
\end{figure}
The MNL and the MNL-specific variant of the NL Bandit Algorithm are indistinguishable, as expected. Furthermore, the NL $1$  variant performs almost as well as the MNL, whereas the NL $2$  variant selects the best arm significantly less frequently than the others. These findings are not surprising, given that the NL $2$  model attempts to impose too much structure on the random environment. Note that the similarity of arms within each nest is set rather high in the NL $2$ variant. Consequently, we will switch to  a different environment which has more structure.\\

For the second simulation, we created a slightly different environment called the NL Environment. In this setup, $n = 9$ and three of these arms are considered better options. In particular, the mean rewards for arms $1,2$ and $3$ are sampled from a normal distribution with a mean of $7.5$, whereas the mean rewards for the other arms are sampled from a normal distribution with a mean of $2.5$. With this setup we can incorporate some meaningful structure in the nest partition. Specifically, there are three nests. Each nest consists of one better alternative and two worse alternatives. We compare the MNL Algorithm with three versions of the NL Bandit Algorithm:
\begin{itemize}
    \item NL $1$ with the same nest parameters as $0.25$, i.,e. $\mu_{\ell} = 0.25$ for $\ell = 1,2,3$.
    \item NL $2$ with the same nest parameters as $0.7$, i.,e. $\mu_{\ell} = 0.7$ for $\ell = 1,2,3$.
    \item NL $3$ with the same nest parameters as $0.45$, i.,e. $\mu_{\ell} = 0.45$ for $\ell = 1,2,3$.
\end{itemize}
    
Figure \ref{fig:env_nl} summarizes the results. Clearly, the NL Bandit Algorithm outperforms the MNL Bandit Algorithm. All three NL versions perform better in terms of average reward and the proportion of times the best arm is played, with NL $3$ being the superior variant. Moreover, the NL Bandit Algorithm is able to gain rewards earlier than the MNL Algorithm. Having a well-structured nest improves performance. Furthermore, the results suggest that the nest parameter should reflect some similarity among the arms but remain moderate enough to allow exploration of the best option within a nest.
\begin{figure}[H]
    \centering
    \begin{subfigure}{\linewidth}
        \centering
        \includegraphics[height=0.3\textheight]{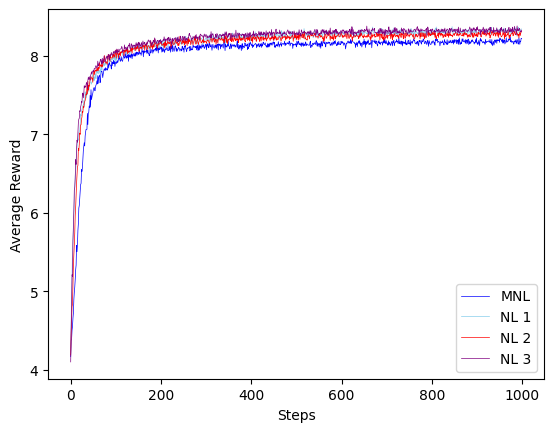} % Replace with your plot file
        %\caption{Plot 1}
        \label{fig:env_nl}
    \end{subfigure}
    
    \vspace{0.5cm} % Adjust the space between plots if needed
    
    \begin{subfigure}{\linewidth}
        \centering
        \includegraphics[height=0.3\textheight]{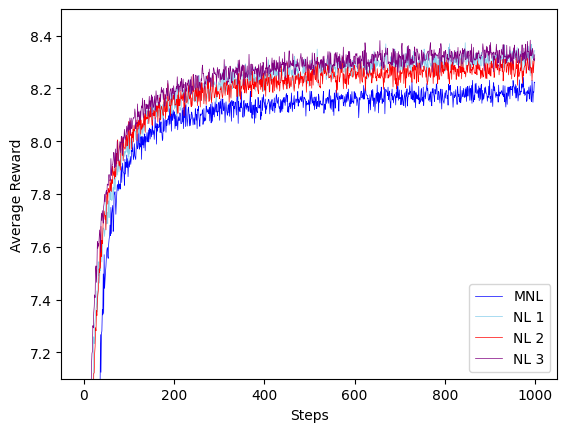} % Replace with your plot file
        %\caption{Plot 2}
        \label{fig:env_nl}
    \end{subfigure}
     
    \vspace{0.5cm} % Adjust the space between plots if needed
    
    \begin{subfigure}{\linewidth}
        \centering
        \includegraphics[height=0.3\textheight]{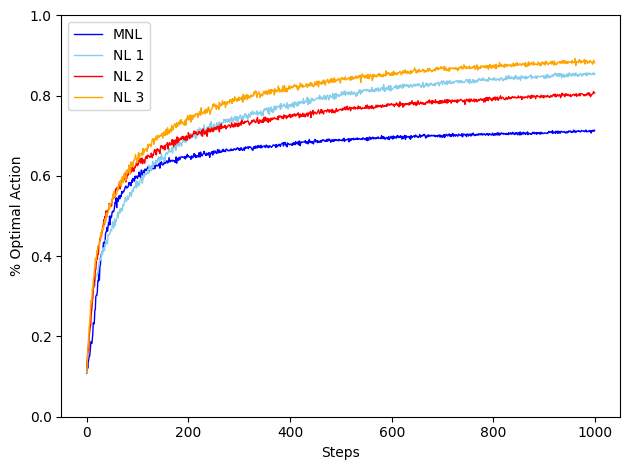} % Replace with your plot file
        %\caption{Plot 2}
        \label{fig:env_nl}
    \end{subfigure}
    
    \caption{NL environment}
    \label{fig:env_nl}
\end{figure}
To gain further insights, let's examine the learned rewards in the NL Environment. Therefore, we conduct two separate single bandit arm experiments, one with $1000$ iterations and another with $2000$ iterations, and compare how the MNL learned mean rewards stack up against those of NL $3$.
\begin{figure}[H]
    \centering
    \begin{subfigure}{\linewidth}
        \centering
        \includegraphics[height=0.3\textheight]{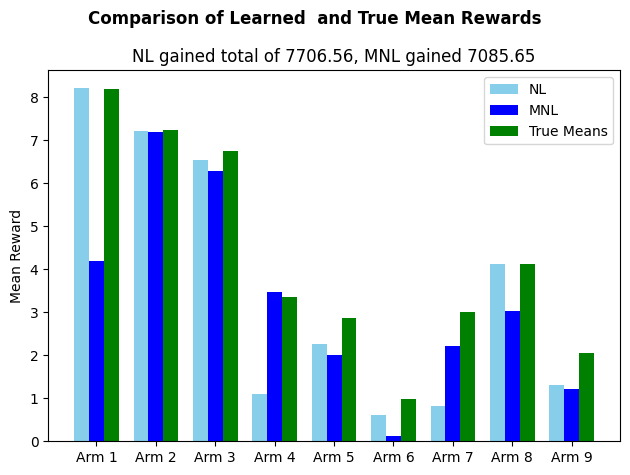} % Replace with your plot file
        %\caption{Plot 1}
        \label{fig:env_nl_learned}
    \end{subfigure}
    
    \vspace{0.5cm} % Adjust the space between plots if needed
    
    \begin{subfigure}{\linewidth}
        \centering
        \includegraphics[height=0.3\textheight]{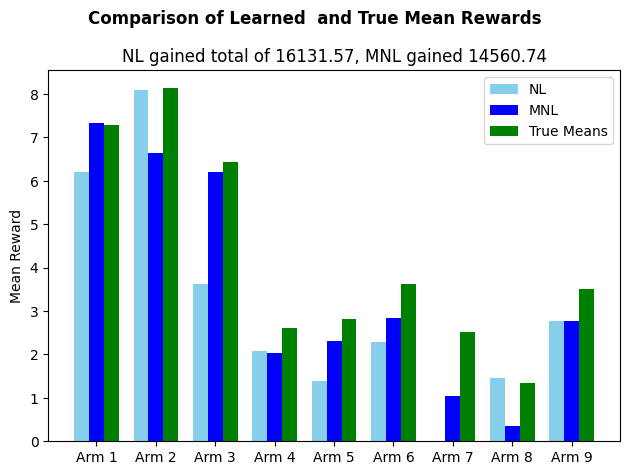} % Replace with your plot file
        %\caption{Plot 2}
        \label{fig:env_nl_learned}
    \end{subfigure}
     
    \vspace{0.5cm} % Adjust the space between plots if needed
      \caption{NL environment learned average rewards}
    \label{fig:env_nl_learned}
\end{figure}
In both simulations, the NL Bandit Algorithm accurately learns the average reward of the best arm, whereas the MNL Bandit Algorithm  fails to identify the best arm. Here, the structure proves to be very helpful. In the first plot, arm $1$, which has the highest mean reward, is in a nest with arms $4$ and $5$. The NL-based Algorithm neglects arm $4$, allowing it to learn about and exploit arm $1$ more effectively and quickly. Similar findings can be deduced from the second plot in Figure \ref{fig:env_nl_learned}. The NL Bandit Algorithm sacrifices exploring arm $7$ to exploit arm $2$ within the same nest. Furthermore, in both trials, the NL Bandit Algorithm collects significantly higher rewards. By incorporating prior knowledge about the structure of the problem, the NL Bandit Algorithm enhances the exploration/exploitation trade-off. \\
\smallskip

The final numerical tests are conducted in an environment with 25 arms. Structure is imposed by sampling the mean rewards of 24 arms from a normal distribution with a mean of 2.5, while the first arm is always considered the best option by adding 2 to the maximum sampled mean reward of the other arms. We refer to this environment as NL Large environment. For all the NL Bandit Algorithms, the best option is located in the first nest, which includes four other arms. Additionally, there are two nests, each containing ten arms. Once again, we compare the MNL Algorithm with three versions of the NL Bandit Algorithm:
\begin{itemize}
    \item NL $1$ with  nest parameters $\mu_{1} = 0.95, mu_{2} = 0.35, mu_{3} = 0.35$.
    \item NL $2$ with  nest parameters $\mu_{1} = 0.95, mu_{2} = 0.25, mu_{3} = 0.25$.
    \item NL $3$ with  nest parameters $\mu_{1} = 0.65, mu_{2} = 0.2, mu_{3} = 0.2$.
\end{itemize}
\begin{figure}[H]
    \centering
    \begin{subfigure}{\linewidth}
        \centering
        \includegraphics[height=0.3\textheight]{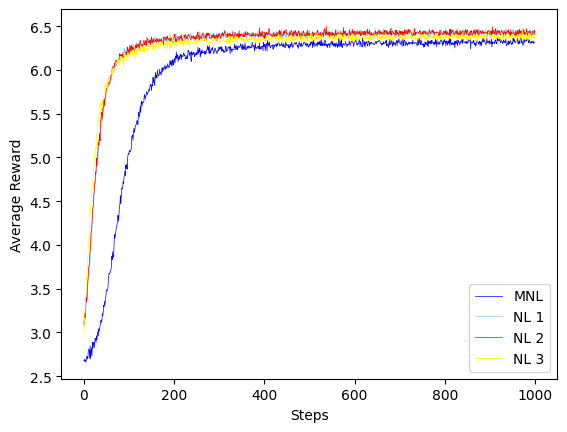} % Replace with your plot file
        %\caption{Plot 1}
        \label{fig:env_nl_large}
    \end{subfigure}
    
    \vspace{0.5cm} % Adjust the space between plots if needed
    
    \begin{subfigure}{\linewidth}
        \centering
        \includegraphics[height=0.3\textheight]{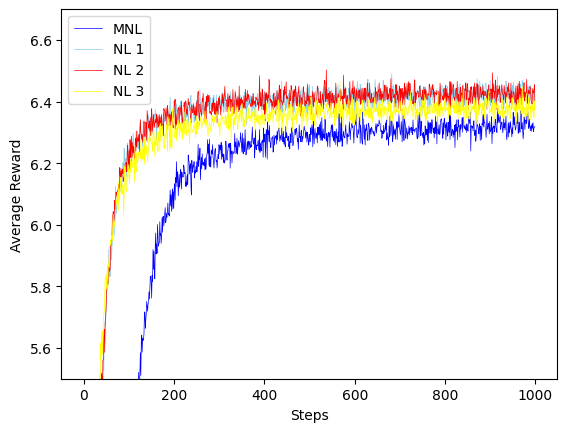} % Replace with your plot file
        %\caption{Plot 2}
        \label{fig:env_nl_large}
    \end{subfigure}
     
    \vspace{0.5cm} % Adjust the space between plots if needed
    
    \begin{subfigure}{\linewidth}
        \centering
        \includegraphics[height=0.3\textheight]{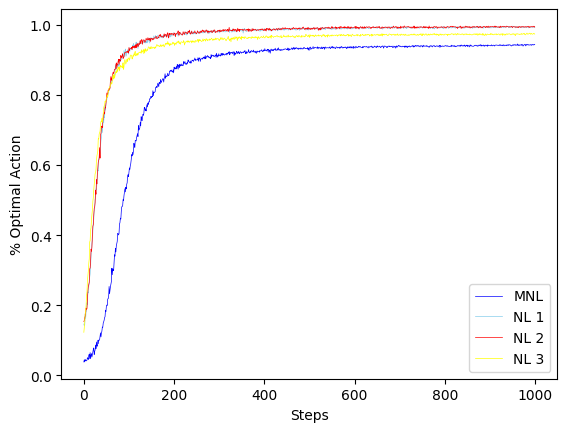} % Replace with your plot file
        %\caption{Plot 2}
        \label{fig:env_nl_large}
    \end{subfigure}
    
    \caption{NL Large environment}
    \label{fig:env_nl}
\end{figure}
\bibliographystyle{plainnat}
\bibliography{lit}
\end{document}